\newtheorem{theorem}{Theorem}
\newtheorem{definition}[theorem]{Definition}
\newtheorem{problem}[theorem]{Problem}
\DeclareMathOperator*{\avg}{\mathrm{avg}}
\newcommand{\ex}{\mathrm{E}}
\newcommand{\ViolatedHyperplane}{\texttt{Viol-HP}\xspace}
\newcommand{\wfhs}{\mathrm{wfhs}}
\newcommand{\ipalg}{{\sc RippleK}\xspace} %
\newcommand{\halg}{{\sc RumrunnerK}\xspace} %
\newif\ifallproofs
\newcommand{\Sushi}{\texttt{Sushi}\xspace}
\newcommand{\Afive}{\texttt{A5}\xspace}
\newcommand{\Anine}{\texttt{A9}\xspace}
\newcommand{\Aseventeen}{\texttt{A17}\xspace}
\newcommand{\Afortyeight}{\texttt{A48}\xspace}
\newcommand{\Aeightyone}{\texttt{A81}\xspace}
\newcommand{\SFwork}{\texttt{SFwork}\xspace}
\newcommand{\SFshop}{\texttt{SFshop}\xspace}
\begin{document}

\twocolumn[

\aistatstitle{Approximating a RUM from Distributions on $k$-Slates}

\aistatsauthor{Flavio Chierichetti \And Mirko Giacchini \And Ravi Kumar \And Alessandro Panconesi \And Andrew Tomkins }

\aistatsaddress{ Sapienza University \\ Dip. di Informatica \And Sapienza University \\ Dip. di Informatica  \And Google \\ Mountain View, CA \And Sapienza University \\ Dip. di Informatica\And Google \\ Mountain View, CA \\
} ]

\begin{abstract}
In this work we consider the problem of fitting Random Utility Models (RUMs) to user choices.  Given the winner distributions of the subsets of size  $k$ of a universe, we obtain a polynomial-time algorithm that finds the RUM that best approximates the given distribution on average.  Our algorithm is based on a linear program that we solve using the ellipsoid method.  Given that its corresponding separation oracle problem is NP-hard,  we devise an approximate separation oracle that can be viewed as a generalization of the weighted feedback arc set problem to hypergraphs. Our theoretical result can also be made practical: we obtain a heuristic that is effective and scales to real-world datasets.
\end{abstract}

\section{INTRODUCTION}

In this paper we consider the setting of discrete choice, in which users must choose an item from a set of alternatives. The gold standard for highly representative models of discrete choice is the family of \emph{Random Utility Models}, or \emph{RUMs}, which are capable of encoding a range of complex outcomes under a broad notion of rational user behavior. RUMs are quite  powerful but unfortunately there are no known algorithms guaranteed to learn or approximate a RUM from samples of user choices.

It is therefore of great interest to find special cases for which tractable algorithms are possible. Recent work of~\citet{ackpt22} showed that efficient algorithms for approximate learning of general RUMs are possible in the \emph{pairwise} setting---if the user choices being modeled contain only two options.

There are some settings in which pairwise choices arise naturally, such as head-to-head testing of two items from a universe, or two-player games. But most practical settings involve choices among larger sets of options. Many online platforms offer interfaces with multiple options to choose from: ten blue links, or a fixed set of movie options shown in a carousel. In this paper, we extend earlier results to give a polynomial-time algorithm to approximately learn RUMs on slates of size at most $k$, for any constant $k$. Formal definitions for approximate learning of RUMs on slates of size at most $k$ are given in Section~\ref{sec:prelim}.

To describe the motivation and the idea behind the algorithm, we must first introduce the idea of RUMs as distributions over permutations over a universe of items $[n]$.%
When a user is presented with a slate of objects, a random permutation is drawn, and the object selected is the one of highest rank in the permutation. Note that a RUM, for each slate $S$, induces a probability distribution %
over $S$, specifying the probability that $i \in S$ is selected by the user when $S$ is given. 
In this context, there are two natural learning tasks to consider. The input to both tasks consists of a set of pairs $(i,S)$ where $i \in S$ is the object selected by the user when $S$ is presented. With this information one can compute the empirical probability that an object is selected from a given slate. The first learning task is to compute a RUM whose induced probability distributions best approximate the empirical %
distributions. The second learning task is, given a set of past such interactions, to predict the empirical probability in the future. 

\paragraph{Our Contributions.} In this paper we give, for both learning tasks, {\em (i)} a polynomial-time algorithm for slates of size $k \ge 2$, thereby  significantly extending the results in~\citet{ackpt22}, and {\em (ii)}  fast heuristics that are effective in practice. 

As in~\citet{ackpt22}, we use the celebrated ellipsoid method for linear programming (LP), together with a separation oracle. Unfortunately, their approach only works only for  the case of slates of size $2$, i.e., for $k = 2$. %
The ellipsoid method, as it is well-known, allows to cope with LPs with exponentially many constraints, provided that a so-called separation oracle is available.
The technical difficulty here is to exhibit such a separation oracle for the case of slates of size $k>2$. In particular, the separation oracle can no longer be
formed by solving the minimum \emph{feedback arc set} problem (FAS) as in~\citet{ackpt22}. Instead, we require a solution to a new problem that represents a hypergraph extension of FAS. The main contribution of this work
is defining the new extended problem, giving a polynomial-time algorithm that produces an approximate solution, and
then showing that the resulting approximate separation oracle provides a sufficient approximation to the best possible RUM.

To conclude, we would like to mention a couple of positive features of our approach. Our polynomial-time solution requires information about all slates of dimension $k$ in order to work, but such complete datasets might not be available in practice. We thus provide a heuristic that is able to cope effectively with such contingencies. We also provide heuristic but practically more efficient separation oracles, which  are described in the experimental section.

\paragraph{Previous Work.}
Discrete choice theory is a well-established research topic in machine learning and economics; see~\citep{train} for an excellent introduction and~\citep{ckt18b,ckt21,ros20,sru20,spu19} for some recent work.
RUMs are a general class that contains, e.g.,  Multinomial Logits (or MNLs) models and Mixed MNLs~\citep{mt00,ckt21}, as special cases.
RUMs are equivalent~\citep{ckt18,fjs09} to models in which a user samples an ordering of the objects in the universe, and given a slate, selects the object in the slate ranked the highest in the ordering.

RUMs, and subclasses of RUMs, have also been extensively studied from both active learning and passive learning perspectives~\citep{spx12,os14,ckt18,ckt18b,notx18,tang20}. The problems of efficiently representing and sketching RUMs have been studied~\citep{fjs09,ckt21,ackpt22}.

The work that is closest to ours is \citep{ackpt22}, which also aims to learn RUMs from choices on slates by solving an LP via the ellipsoid method with an approximate separation oracle.  The differences between our work and theirs can be summarized as follows: (i) their algorithm  works {\em only} for slates of size $k=2$, while ours works for any constant $k$, (ii) their LP also works \emph{only} for $k= 2$ and, most importantly, (iii) our separation oracles are \emph{different} and \emph{new}: we (approximately) solve a novel generalization of the FAS problem; this might be of independent interest.

\paragraph{Organization.} Section~\ref{sec:prelim} establishes the  notation.  Section~\ref{sec:lp} provides formulations of the primal and dual LPs for RUM fitting, Section~\ref{sec:wfhs_alg} introduces the pivotal element of the approximate separation oracle for the LP, Section~\ref{sec:poly_alg} shows that this oracle can be used with the ellipsoid method to find an approximately optimal RUM. Section~\ref{sec:nalb} discusses lower bounds for the size of the input for our LP method.  Section~\ref{sec:exp} presents our experimental results.

\section{PRELIMINARIES
} \label{sec:prelim}

For a set $S$, let $2^S$ denote the set of all subsets of $S$ and let ${S \choose k}$ denote the set of all $k$-sized subsets of $S$.  For a distribution $D$, let $x \sim D$ denote that the random variable $x$ is drawn according to $D$ and let $D(i)$ denote $\Pr_{x \sim D}[x = i]$, where $i$ is in the support of $D$.

Let $[n]$ denote $\{1,\ldots,n\}$ and let $\mathbf{S}_n$ denote the set of permutations of $[n]$. 
A \emph{slate} is a non-empty subset of $[n]$.  For a slate $\varnothing \neq S \in 2^{[n]}$ and a permutation $\pi \in \mathbf{S}_n$, let $\pi(S)$ be the element of $S$ that ranks the highest in $\pi$.    

\begin{definition}[Random utility model (RUM)]
A \emph{random utility model (RUM)} $R$ on $[n]$ is a distribution $D$ on $\mathbf{S}_n$.  For a slate $S$, let $R_S$ denote the distribution of the random variable $\pi(S)$, where $\pi \sim D$.  We say that $R_S$ is the \emph{winner distribution} on $S$ induced by RUM $R$.
\end{definition}
Our goal is to fit RUMs to datasets in order to minimize the average $\ell_1$-error over the winner distributions.
\begin{definition}[Average RUM  approximation]
Let $\mathcal{S}$ be a set of slates of $[n]$ and for each $S \in \mathcal{S}$, let $\mathcal{P}(S)$ be a probability distribution over $S$.  We say that $\mathcal{P}$ can be
\emph{approximated on average} to within $\epsilon$ by a RUM $R$ if
$\avg_{S \in \mathcal{S}}\left|R_{S} - \mathcal{P}(S)\right|_1 \le \epsilon$.
Given $\mathcal{P}$, let $\epsilon_{1}(\mathcal{P})$ be the smallest\footnote{\label{footnote:min_vs_inf}This minimum exists since it is the optimal value of  a finite-sized, feasible, LP.} $x \ge 0$ such that there is a RUM that approximates $\mathcal{P}$ on average to within $x$.
\end{definition}
Recall that the $\ell_1$-distance between two distributions over $S$ is (exactly) twice as large as the total variation distance between them, i.e., it is twice as large as the maximum gap in the probabilities of an event in the two distributions.
\begin{problem}[Average RUM additive approximation] 
\label{prob:avg}
Given $\mathcal{S}$
and a corresponding $\mathcal{P}$, find a \emph{$\delta$-additive approximation} to $\epsilon_{1}(\mathcal{P})$, i.e., 
obtain a RUM whose average distance from $\mathcal{P}$ is not larger than $\epsilon_1(\mathcal{P}) + \delta$.
\end{problem}

We also need the following generalization of the weighted feedback edge set problem to hypergraphs.  For a  hyperedge $e \in {[n] \choose k}$ and a permutation $\pi \in \mathbf{S}_n$, let
$\pi(e)$ be the element of $e$ that ranks highest in $\pi$.
\begin{problem}[Weighted feedback hyperedge set (WFHS)]
\label{prob:wfhs}
An instance of the \emph{$(\tau,k)$-bounded weighted  feedback hyperedge set (WFHS)} problem is composed of a set $V = [n]$ of vertices, a set $E \subseteq \binom{V}k$ of hyperedges  and, for each $e \in E$, a non-negative weight function $w_e: e \rightarrow [0, \tau]$. The cost of a permutation $\pi$ of the vertices of $V$ is equal to $C(\pi) = \sum_{e \in E} w_e(\pi(e))$. The WFHS problem is to find a permutation $\pi \in \mathbf{S}_n$ that minimizes $C(\pi)$.
\end{problem}

\section{FITTING RUMS WITH LINEAR PROGRAMS}\label{sec:lp}
Our goal in this section is to write down a linear program (LP) whose solution gives the desired RUM. This LP will have exponentially many constraints but polynomially many variables. Thanks to the general theory of the ellipsoid method, it is possible to solve the LP to within a very small error, provided that a so-called \emph{separation oracle} exists. Such an oracle is provided in the next sections.

Let us begin by writing down an LP whose solution gives a RUM attaining minimum average $\ell_1$-error for the slates in $\mathcal{S}$, and which generalizes the LP of~\citet{ackpt22} to $k \ge 2$.\footnote{For the interested reader, the LP of~\citet{ackpt22} can be obtained from ours by setting $k = 2$. The main technical difference between the LPs stems from the fact that, when $k = 2$,     enforcing a bound on the error of the probability that a particular element wins in a slate is {\em equivalent} to enforcing a bound on the total variation error of the slate distributions  (for distributions $P = (x, 1-x)$ and $Q = (y, 1-y)$ on two elements, it is easy to see that $|P-Q|_{1} = 2 |P-Q|_\infty$); unfortunately, this property fails to hold if $k>2$. We thus have to provide an argument that makes use of a novel polyhedron ($\overline{F}_{\rho}$).}
In order to write it down, we need to have access to $\mathcal{P}$.  For simplicity, let $D_S$ denote $\mathcal{P}(S)$.  Then $D_S(i)$ is the (empirical) probability that $i$ wins in $S$, for each $i \in S \in \mathcal{S}$. (Observe that, for each $S \in \mathcal{S}$, $\sum_{i \in S} D_S(i) = 1$.) The LP assigns probability $p_\pi \ge 0$ to each permutation $\pi \in \mathbf{S}_n$, and requires that $\sum_{\pi \in \mathbf{S}_n} p_{\pi} = 1$.
\begin{equation}\label{eqn:prelp}\left\{\begin{array}{lcl}
\min \frac{1}{|\mathcal{S}|}\cdot \sum\limits_{S \in \mathcal{S}}\sum\limits_{i \in S} \epsilon_{S,i} &\\
 \epsilon_{S,i} + \hspace{-0.25cm}\sum\limits_{\substack{\pi\in \mathbf{S}_n\\\pi(S) = i}}\hspace{-0.25cm} p_{\pi} \ge D_S(i) & (L_{S,i}) & \forall i\hspace{-0.065cm} \in\hspace{-0.065cm} S \hspace{-0.065cm}\in\hspace{-0.065cm} \mathcal{S}\\
 \epsilon_{S,i} -\hspace{-0.25cm}\sum\limits_{\substack{\pi\in \mathbf{S}_n\\\pi(S) = i}} \hspace{-0.25cm}p_{\pi} \ge -D_S(i) & (U_{S,i}) & \forall i\hspace{-0.065cm} \in\hspace{-0.065cm} S \hspace{-0.065cm}\in\hspace{-0.065cm} \mathcal{S}\\
 \sum\limits_{\pi\in \mathbf{S}_n} p_{\pi} = 1 & (D) &\\
p_{\pi} \ge 0 & &\forall \pi\in \mathbf{S}_n\\
\epsilon_{S,i} \ge 0 && \forall i\hspace{-0.05cm} \in\hspace{-0.05cm} S \hspace{-0.05cm}\in\hspace{-0.05cm} \mathcal{S}
\end{array}\right.
\end{equation}
 Then, $\{p_{\pi}\}_{\pi \in \mathbf{S}_n}$ defines a RUM. Given any $i \in S \in \mathcal{S}$, it also requires that the approximation error made by every RUM that is a feasible solution for the pair $(i, S)$ is no greater than $\epsilon_{S,i}$. The constraints of type $L_{S,i}$ guarantee that the probability that $i$ wins in $S$ is at least $D_S(i) - \epsilon_{S,i}$; those of type $U_{S,i}$ guarantee that the same probability is at most $D_S(i) + \epsilon_{S,i}$. Therefore, the $\ell_1$-error made by the optimal RUM on slate $S$ is not more than $\sum_{i \in S} \epsilon_{S,i}$. The LP minimizes the average (i.e., the scaled sum) of the $\ell_1$-errors over all slates $\mathcal{S}$ (i.e., $|\mathcal{S}|^{-1}\sum\limits_{S \in \mathcal{S}}\sum\limits_{i \in S} \epsilon_{S,i}$). It follows that the optimal solution ensures that each $\epsilon_{S,i}$ {\em equals} the $\ell_1$-error on $S$ and therefore that its average $\ell_1$-error is the minimum achievable by any RUM.

LP~(\ref{eqn:prelp}) has exponentially many variables and polynomially many constraints. If we take its dual, we obtain an LP with polynomially many variables and exponentially many constraints.  Hence, it can be optimized efficiently, given a separation oracle. Here is the dual of~(\ref{eqn:prelp}):
$$\left\{\begin{array}{lcl}
\multicolumn{3}{l}{\max D + \sum\limits_{i \in S \in \mathcal{S}} \left(D_S(i) \cdot (L_{S,i} - U_{S,i})\right)} \\
L_{S,i} + U_{S,i} \le |\mathcal{S}|^{-1} & (\epsilon_{S,i})&\forall i \in S \in \mathcal{S}\\
D + \sum\limits_{S \in \mathcal{S}} (L_{S,\pi(S)} - U_{S,\pi(S)}) \le 0 & (p_{\pi}) & \forall \pi\in \mathbf{S}_n\\
L_{S,i},U_{S,i} \ge 0&&\\
D \text{ unrestricted}&&
\end{array}\right.$$

Observe that every feasible dual solution  can be transformed into a feasible solution with the same value and with the additional property that, for all $i \in S \in \mathcal{S}$, at least one of $L_{S,i}$ and $U_{S,i}$ is equal to zero.  To see this, observe that if the two variables are positive we can subtract $\min(L_{S,i},U_{S,i})$ from both without affecting feasibility and without changing the objective function's value. Given a feasible solution to the dual, define $\Delta_{S,i} = U_{S,i} - L_{S,i}$. With this transformation, we have $U_{S,i} = \max\left(\Delta_{S,i},0\right)$, $L_{S,i} = \max\left(-\Delta_{S,i},0\right)$, and $|\Delta_{S,i}| = L_{S,i} + U_{S,i}$. 
The dual LP is then equivalent to the following LP:
\begin{equation}\label{eqn:dual}\left\{\begin{array}{ll}
\max D - \sum\limits_{i \in S \in \mathcal{S}} \left(D_S(i) \cdot \Delta_{S,i}\right) &\\
\sum\limits_{S \in \mathcal{S}} \Delta_{S,\pi(S)} \ge D &  \forall \pi\in \mathbf{S}_n\\
-|\mathcal{S}|^{-1}\le \Delta_{S,i} \le |\mathcal{S}|^{-1} & \forall i \in S \in \mathcal{S}\\
D \text{ unrestricted}&
\end{array}\right.
\end{equation}
We now transform LP~(\ref{eqn:dual}) from a maximization problem into a feasibility problem, to pave the way for the ellipsoid algorithm:
\begin{equation*}%
\small
F_{\rho} := 
\left\{\begin{array}{@{\;}r@{\;}l@{\;\;\;}l}
&\multicolumn{2}{l}{\hspace{-0.17cm}D - \sum\limits_{i \in S \in \mathcal{S}} \left(D_S(i) \cdot \Delta_{S,i}\right) \ge \rho}\\
&\sum\limits_{S \in \mathcal{S}} \Delta_{S,\pi(S)} \ge D & \mbox{ \footnotesize  $\forall \pi\in \mathbf{S}_n$}\\
&-|\mathcal{S}|^{-1}\hspace{-0.05cm} \le \Delta_{S,i} \le \hspace{-0.05cm}|\mathcal{S}|^{-1} &\mbox{ \footnotesize  $\forall i \in S \in \mathcal{S}$}
\end{array}\right.
\end{equation*}
Observe that LP~(\ref{eqn:dual}) has value at least $\rho$ iff $F_{\rho}$ is feasible. In order to have the polytope lie in the non-negative orthant, we set $\overline{\Delta}_{S,i} = \Delta_{S,i} + |\mathcal{S}|^{-1}$. We then rewrite the expressions in terms of $\overline{\Delta}_{S,i}$:
\begin{align*}
&D - \sum\limits_{i \in S \in \mathcal{S}} \left(D_S(i) \cdot \Delta_{S,i}\right)\\
&= D - \sum\limits_{i \in S \in \mathcal{S}} \left(D_S(i) \cdot \left(\overline{\Delta}_{S,i} - |\mathcal{S}|^{-1}\right)\right) \\
&= D - \sum\limits_{i \in S \in \mathcal{S}} \left(D_S(i) \cdot \overline{\Delta}_{S,i} \right) + \sum\limits_{S \in \mathcal{S}} \sum\limits_{i \in S} \left(D_S(i) \cdot |\mathcal{S}|^{-1}\right)\\
&= D - \sum\limits_{i \in S \in \mathcal{S}} \left(D_S(i) \cdot \overline{\Delta}_{S,i} \right) + \sum\limits_{S \in \mathcal{S}}  |\mathcal{S}|^{-1}\\
&= D - \sum\limits_{i \in S \in \mathcal{S}} \left(D_S(i) \cdot \overline{\Delta}_{S,i} \right) + 1.
\end{align*}
Thus, $D - \sum\limits_{i \in S \in \mathcal{S}} \left(D_S(i) \cdot \Delta_{S,i}\right) \ge \rho$ is equivalent to $D - \sum\limits_{i \in S \in \mathcal{S}} \left(D_S(i) \cdot \overline{\Delta}_{S,i} \right) \ge \rho - 1$. Moreover, for $\pi \in \mathbf{S}_n$, $\sum_{S \in \mathcal{S}} \Delta_{S,\pi(S)} = \sum_{S \in \mathcal{S}} \left(\overline{\Delta}_{S,\pi(S)} - \left|\mathcal{S}\right|^{-1}\right) = \sum_{S \in \mathcal{S}}\overline{\Delta}_{S,\pi(S)} - 1$. Thus, $\sum_{S \in \mathcal{S}} \Delta_{S,\pi(S)} \ge D$ is equivalent to $\sum_{S \in \mathcal{S}} \overline{\Delta}_{S,\pi(S)} \ge D + 1$. Finally, $-|\mathcal{S}|^{-1} \le \Delta_{S,i} \le|\mathcal{S}|^{-1}$ is equivalent to $0 \le \overline{\Delta}_{S,i} \le 2 |\mathcal{S}|^{-1}$. Hence, the system $F_{\rho}$ is equivalent to the following system $\overline{F}_{\rho}$. 
\begin{equation}\label{eqn:feas}
\small
\overline{F}_{\rho} := 
\left\{\begin{array}{@{\;}r@{\;}l@{\;\;\;}l}
\mbox{\footnotesize$c_{\rho}:$}&\multicolumn{2}{l}{\hspace{-0.17cm}D - \sum\limits_{i \in S \in \mathcal{S}} \left(D_S(i) \cdot \overline{\Delta}_{S,i}\right) \ge \rho - 1}\\
\mbox{\footnotesize  $c_{\pi}:$}&\sum\limits_{S \in \mathcal{S}} \overline{\Delta}_{S,\pi(S)} \ge D + 1& \mbox{ \footnotesize  $\forall \pi\in \mathbf{S}_n$}\\
\mbox{\footnotesize  $c_{S,i}:$}&0 \le \overline{\Delta}_{S,i} \le \hspace{-0.05cm}2|\mathcal{S}|^{-1} &\mbox{ \footnotesize  $\forall i \in S \in \mathcal{S}$}
\end{array}\right.
\end{equation}
LP~(\ref{eqn:dual}) has value at least $\rho$ if and only if $\overline{F}_{\rho}$ is feasible.

To summarize, we have reduced Problem~\ref{prob:avg} to the feasibility problem $\overline{F}_{\rho}$. %
Since $\overline{F}_{\rho}$ has exponentially many constraints, we cannot check the validity of its $c_{\pi}$ constraints one by one\footnote{The number of $C_{S,i}$'s constraints and $c_{\rho}$ is only $O\left(\left|\mathcal{S}\right|\right)$, i.e., it is (sub)linear in the input size, and hence can be easily checked.}. On the other hand, the feasibility of the full set of the $c_{\pi}$ constraints of $\overline{F}_{\rho}$ can be checked by solving an instance of Problem~\ref{prob:wfhs}, which is an instance of the WFHS problem with weights given by the  $\overline{\Delta}_{S,i}$'s. In the next section, we provide an approximation algorithm for WFHS. And in the subsequent section, we use this approximation algorithm to provide an approximate separation oracle for $\overline{F}_{\rho}$, which we will then use to solve Problem~\ref{prob:avg}, our original RUM fitting problem, in polynomial time.

\section{AN ALGORITHM FOR THE WFHS PROBLEM}\label{sec:wfhs_alg}
We introduce notation and recall some results before discussing our  $O(\epsilon \cdot \tau \cdot n^{k})$-additive approximation algorithm for the WFHS problem; this algorithm, like the algorithms of~\citet{ks07,schudy12,fk99} for the feedback arc set (FAS) problem, makes use of tensor-maximization algorithms for $k$-CSP (constraint satisfaction problem) as a building block. 

Our $k$-CSP will have a constant-sized alphabet $[t]$, for $t = O(\epsilon^{-1})$. The $k$-CSP has one type of predicate $P$ of arity $k$: $P(x_1,\ldots,x_k) = \left[x_1 > \max(x_2,\ldots,x_k)\right]$.

An instance $\mathcal{I}$ of this \emph{$k$-CSP} is composed of a set $X = \{x_1,\ldots,x_n\}$ of variables, of a set $M$  of $k$-tuples of variables of $X$ (the constraints induced by $P$), and of a weighting $w: M \rightarrow [0,\tau]$. The variables take values over $[t]$. %

The goal of the problem is to assign values to the variables (where the value of $x_i \in X$ is an element of $[t]$) in order to maximize the total weight of $k$-tuples $(x_{i_1},\ldots,x_{i_k}) \in M$ that make the predicate $P(x_{i_1},\ldots,x_{i_k})$ true (i.e., maximize $\sum_{(x_{i_1},\ldots,x_{i_k}) \in M} \left(P(x_{i_1},\ldots,x_{i_k}) \cdot w(x_{i_1},\ldots,x_{i_k})\right)$). This maximum value is called the \emph{optimal value} of the $k$-CSP  instance and is denoted $\text{OPT}(\mathcal{I})$.

We will make use of the following result, that has been proved by various authors including, e.g., \citep{schudy12,y14}.
\begin{theorem}\label{thm:schudy}
For each constant $\epsilon > 0$, for each non-negative integers $t,k \ge 1$, and for each $k$-CSP over the alphabet $[t]$, there exists an algorithm that, in time $O(n^k)$, returns an assignment to the $n$ variables of a generic instance $\mathcal{I}$  of the $k$-CSP such that the expected weight of the $k$-tuples satisfied by the assignment is at least $\text{OPT}(\mathcal{I}) -\epsilon \cdot \tau \cdot  n^k$.
\end{theorem}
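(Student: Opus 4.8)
The plan is to reconstruct the standard \emph{exhaustive-sampling} argument (as in \citep{schudy12,y14,fk99}) that yields an additive approximation for dense constraint-satisfaction problems. First I would observe that the objective is bounded: since $|M| \le n^k$ and every weight lies in $[0,\tau]$, we have $0 \le \text{OPT}(\mathcal{I}) \le \tau n^k$, so an additive error of $\epsilon\tau n^k$ is a genuine (scale-invariant) guarantee. I would write the value of an assignment $\sigma\colon X \to [t]$ as $\mathrm{val}(\sigma) = \sum_{e\in M} w(e)\,P(\sigma|_e)$, a degree-$k$ polynomial in the indicator variables $z_{i,a}=[\sigma(x_i)=a]$ (multilinear on tuples with distinct variables; the general case is analogous), and prove the theorem by exhibiting a randomized algorithm meeting the bound in expectation.

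The algorithm fixes the variables one at a time via the method of conditional expectations: to set $x_i$ we would like to know, for each candidate value $a\in[t]$, its \emph{influence} --- the expected change in $\mathrm{val}$ obtained by setting $\sigma(x_i)=a$ while the remaining unset variables follow the (unknown) optimal assignment $\sigma^\star$. For $k=2$ each influence is a single linear statistic $\sum_{e\ni x_i} w(e)\,[\,a>\sigma^\star(\cdot)\,]$, which can be estimated to within $\pm\epsilon\tau n$ from a random sample $R\subseteq X$ of constant size $r=\Theta(\epsilon^{-2}\log t)$ by evaluating the partial sums on $R$ and rescaling by $n/|R|$; a Hoeffding bound supplies the concentration, and the algorithm enumerates all $t^r$ guesses for $\sigma^\star|_R$ so that the \emph{correct} guess is tried. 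Since picking, at each step, the value maximizing the (estimated) conditional expectation never does worse than following $\sigma^\star$, the per-variable errors telescope to a total additive loss of $O(\epsilon\tau n^k)$ against the conditional-expectation baseline, which for the correct guess equals $\mathrm{val}(\sigma^\star)=\text{OPT}(\mathcal{I})$.

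The main obstacle is the passage from $k=2$ to general $k$: for $k>2$ an influence is itself a degree-$(k-1)$ polynomial in the remaining variables rather than a fixed linear statistic, so a single layer of sampling no longer suffices. I would resolve this by a recursion on the arity $k$ (the heart of the argument): each influence polynomial is approximated by recursively guessing and sampling the lower-order statistics, with sample sizes and rounding granularity chosen so that the errors accumulated across the $k$ levels stay within $\epsilon\tau n^k$; crucially the number of distinct low-dimensional statistics that must be guessed is bounded by a function of $\epsilon,t,k$ only, so the enumeration contributes merely a constant factor. The running time is then dominated by evaluating the relevant sums over the at most $n^k$ tuples a constant (depending on $\epsilon,t,k$) number of times, giving the claimed $O(n^k)$ bound; the only randomness is the draw of $R$, over which the best enumerated guess attains value at least $\text{OPT}(\mathcal{I})-\epsilon\tau n^k$ in expectation, matching the statement.
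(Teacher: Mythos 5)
You should first note that the paper contains \emph{no proof} of Theorem~\ref{thm:schudy}: it is invoked as a black box, with the statement explicitly attributed to prior work \citep{schudy12,y14}. So there is no internal argument to compare you against; the only fair benchmark is the proofs in those references. Against that benchmark, your sketch has the right architecture: boundedness of the objective ($0 \le \mathrm{OPT}(\mathcal{I}) \le \tau n^k$), a constant-size random sample with exhaustive enumeration of its $t^{|R|}$ assignments so that the restriction of $\sigma^\star$ is among the guesses, greedy fixing of the remaining variables by estimated conditional expectations, and the hybrid argument (choosing the value maximizing the estimated conditional expectation never loses to choosing $\sigma^\star(x_i)$, beyond estimation error), with per-step errors of $\epsilon\tau n^{k-1}$ telescoping to $O(\epsilon\tau n^k)$. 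That is exactly the ``sample, enumerate, go greedy'' scheme of the cited works, and you correctly isolate the crux for $k>2$: the influence of a variable is a degree-$(k-1)$ statistic of the unset variables, so one layer of sampling does not suffice.

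That said, the step you yourself call ``the heart of the argument'' is the one you leave as a gesture, and as stated it has two genuine gaps. First, the recursion on arity with freshly guessed lower-order statistics must be carried out at every one of the $n$ greedy steps, and the guesses at different steps are correlated through the common $\sigma^\star$; the claim that the total enumeration is ``bounded by a function of $\epsilon,t,k$ only'' is precisely what needs proof, not an observation. Second, a single sample $R$ is reused across all steps while the greedy prefix \emph{depends} on $R$, so the later estimates are no longer unbiased estimates of statistics of the hybrid assignment; your Hoeffding/rescaling argument does not survive this conditioning as written. The linear-time proofs you are trying to reconstruct sidestep both issues with a device absent from your plan: the variables are processed in \emph{random order}, and the influence of $x_i$ is computed from the already-assigned prefix (which, by the random order, is itself a uniform random subset of the final assignment), so a single concentration layer suffices and each step costs $O(n^{k-1}\cdot\mathrm{poly}(t))$, giving the claimed $O(n^k)$ total. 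In short: as a blind reconstruction your proposal identifies the correct method and the correct difficulty, but the $k>2$ case is a plan rather than a proof; for the paper itself nothing is at stake, since the theorem is (properly) quoted from the literature rather than proved.
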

We also mention that a slower algorithm, with a runtime $n^{2^{O(k)}}$, for approximately solving  generic $k$-CSPs was given in~\citet{yz14}.

We now show that the WFHS problem can be additively approximated. Our approach, like that of~\citet{ks07,schudy12,fk99} for the feedback arc set (FAS) problem, solves the WFHS problem by first (i) casting it as a tensor-maximization/$k$-CSP problem and then (ii) transforming the solution to this problem into an ordering of the vertices.
\begin{theorem}\label{thm:wfhs}
For each constants $0 < \epsilon < \frac12, \alpha > 0, k \ge 2$, there exists an algorithm for the $(\tau,k)$-bounded WFHS problem  that can return an $(\epsilon \cdot \tau \cdot n^k)$-additive approximation, with probability at least $1- n^{-\alpha}$, in time $O( \alpha n^k \log n)$.
\end{theorem}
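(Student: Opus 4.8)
The plan is to reduce the WFHS \emph{minimization} problem to the $k$-CSP \emph{maximization} problem of Theorem~\ref{thm:schudy}, invoke that theorem, and translate the resulting assignment back into a permutation, carefully accounting for the error introduced by ties. For each hyperedge $e = \{v_1,\dots,v_k\} \in E$ and each $j \in [k]$, I would create one $k$-CSP constraint that puts $v_j$ in the first coordinate and the other $k-1$ vertices in the remaining coordinates, with weight $\tau - w_e(v_j)$; this lies in $[0,\tau]$ since $w_e(v_j) \le \tau$, and there are $k|E| = O(n^k)$ constraints in total. Because the predicate $P$ tests whether its first argument is the \emph{strict} maximum, an assignment $\phi \colon V \to [t]$ satisfies this constraint exactly when $v_j$ is the unique $\phi$-largest vertex of $e$; as a strict maximum is unique when it exists, at most one of the $k$ constraints of a given edge is satisfied. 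Hence $\mathrm{val}(\phi) = \sum_{e} (\tau - w_e(m_e))$, where the sum ranges over edges $e$ possessing a unique $\phi$-maximum $m_e$.

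Given any assignment $\phi$, I would build a permutation $\pi$ by sorting the vertices by $\phi$-value and breaking ties arbitrarily. Every edge with a unique $\phi$-maximum keeps that vertex as its winner, so $|E|\tau - C(\pi) = \mathrm{val}(\phi) + \sum_{e \text{ tied}} (\tau - w_e(\pi(e))) \ge \mathrm{val}(\phi)$; that is, a high-value assignment yields a low-cost permutation \emph{with no loss} from ties in this direction. The delicate direction is the converse bound $\text{OPT}(\mathcal{I}) \ge |E|\tau - C^{*}$, where $C^{*}$ is the WFHS optimum attained by some $\pi^{*}$. Here I would discretize $\pi^{*}$ into $t$ levels by laying a random grid of spacing $1/t$ over the normalized ranks (uniform random offset). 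This map is monotone, so for every edge the winner $\pi^{*}(e)$ survives as the strict $\phi$-maximum unless it lands in the same level as the runner-up; therefore $\ex[\mathrm{val}] \ge (|E|\tau - C^{*}) - \tau \sum_{e} \Pr[\text{collision}_e]$, and $\text{OPT}(\mathcal{I})$ is at least this.

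The crux is bounding $\sum_{e} \Pr[\text{collision}_e]$. For the random grid, $\Pr[\text{collision}_e] = \max(1 - t\delta_e, 0)$, where $\delta_e$ is the normalized rank-gap between the top two vertices of $e$; in particular it is nonzero only when that gap is below $1/t$. The combinatorial heart of the argument is that, for each integer rank-distance $d$, the number $N_d$ of edges whose top two vertices sit at distance exactly $d$ is $O(n^{k-1})$ uniformly in $d$: fixing the two topmost positions, the remaining $k-2$ vertices must be chosen from the strictly lower ranks. Summing only over $d \le \lceil n/t \rceil$ then gives $\sum_e \Pr[\text{collision}_e] \le \sum_{d \le n/t} N_d = O(n^{k}/t) = O(\epsilon n^{k})$, since $t = \Theta(\epsilon^{-1})$. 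I expect this counting --- establishing that collision-prone edges form only an $O(1/t)$ fraction of the $\Theta(n^{k})$ edges --- to be the main obstacle, and the precise reason the additive slack $\epsilon\tau n^{k}$ of Theorem~\ref{thm:schudy} is not overwhelmed. Chaining the two directions with that slack shows the permutation extracted from a near-optimal CSP assignment has cost $C^{*} + O(\epsilon\tau n^{k})$; rescaling the internal accuracy turns the $O(\cdot)$ into exactly $\epsilon\tau n^{k}$.

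Finally, Theorem~\ref{thm:schudy} controls only the \emph{expected} CSP value, so I would amplify: writing $Y = \text{OPT}(\mathcal{I}) - \mathrm{val}(\phi) \ge 0$ with $\ex[Y] \le \epsilon\tau n^{k}$, Markov's inequality gives $\Pr[Y \le 2\epsilon\tau n^{k}] \ge \tfrac12$ per run. Running the CSP algorithm $m = O(\alpha\log n)$ independent times, converting each output to a permutation, and retaining the one of smallest cost $C(\pi)$ fails with probability at most $2^{-m} \le n^{-\alpha}$. Each run, together with the $O(n^{k})$ cost of evaluating $C(\pi)$, takes $O(n^{k})$ time, for a total of $O(\alpha n^{k}\log n)$, matching the statement.
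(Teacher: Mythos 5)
Your proposal is correct and follows the same overall architecture as the paper's proof: the identical $k$-CSP encoding ($k$ constraints per hyperedge with weights $\tau - w_e(\cdot)$, so that at most one constraint per edge can be satisfied), the same loss-free conversion from an assignment back to a permutation by sorting by value with arbitrary tie-breaking, and the same Markov-plus-repetition amplification giving $O(\alpha \log n)$ runs and $O(\alpha n^k \log n)$ total time. Where you genuinely diverge is the forward direction, i.e., showing $\mathrm{OPT}(\mathcal{I}) \ge |E|\cdot\tau - C(\pi^{\star}) - O(\epsilon \cdot \tau \cdot n^k)$. The paper discretizes $\pi^{\star}$ deterministically into $t = \lceil \epsilon^{-1}\rceil$ contiguous rank blocks $R_i$ and bounds the bad set $E_2$ by counting \emph{all} hyperedges with at least two elements inside a common block, via $\sum_{a\ge 2}\sum_{i} \binom{|R_i|}{a}\binom{n-|R_i|}{k-a} = O(\epsilon \cdot n^k)$. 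You instead lay a uniformly shifted grid of spacing $1/t$ over normalized ranks, observe (correctly, since levels are contiguous rank intervals) that the winner of $e$ fails to be the strict maximum exactly when it collides with the runner-up, compute $\Pr[\mathrm{collision}_e] = \max(1 - t\delta_e, 0)$, and control the total via the counting lemma $N_d = O(n^{k-1})$ uniformly over rank gaps $d$, summed over $d \le n/t$. Both routes yield $O(\epsilon \cdot n^k)$ colliding edges; yours is slightly sharper (it charges only winner--runner-up proximity rather than arbitrary co-located pairs), and the probabilistic-method step is sound because the random offset appears only in the analysis, never in the algorithm, while the paper's fixed bucketing is simpler and needs no auxiliary randomness. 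Two trivial details to patch in your version: a shifted grid over $(0,1]$ can create $t+1$ cells, so the alphabet should be taken as $[t+1]$, which is still $O(\epsilon^{-1})$ and thus permitted by Theorem~\ref{thm:schudy}; and, as in the paper, a final rescaling of the internal accuracy parameter by the hidden constant is needed to land exactly on $\epsilon \cdot \tau \cdot n^k$, which you do note.
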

\begin{proof}
Given an instance $V,E,\{w_e\}_{e \in E}$ of the WFHS problem, we define a $k$-CSP with variables $x_1,\ldots,x_n$ over the alphabet $[t]$, for $t = \lceil \epsilon^{-1}\rceil$. For each hyperedge $e = \left\{i_1,\ldots,i_{k}\right\} \in E$, we create $k$ constraints: $C_{i_1,e} := P(x_{i_1},x_{i_2},\ldots,x_{i_{|e|-1}}, x_{i_{k}})$, $C_{i_2,e} := P(x_{i_2},x_{i_3},\ldots,x_{i_{k-1}}, x_{i_{k}},x_{i_1})$, $\ldots$, $C_{i_{k},e} := P(x_{i_{k}},x_{i_1},\ldots,x_{i_{k-2}}, x_{i_{k-1}})$. The weight of the constraint $C_{i_j,e}$ will be $w\left(C_{i_j,e}\right) = \tau - w_e(i_j)$ and let $\Gamma_e = \{C_{i_j,e} \mid i_j \in e\}$. Let $\mathcal{I}$ be the resulting $k$-CSP instance.

Now, let $\pi \in \mathbf{S}_n$. For each $i \in \left[t\right]$, and for each item $j$ having rank in $\pi$ in the set $R_i = \left[\left\lceil i \cdot \frac n t\right\rceil\right] \setminus \left[\left\lceil (i-1) \cdot \frac{n}t\right\rceil\right]$ (then, $|R_i| \le \frac nt +1$), we assign value $i$ to the variable $x_j$. Let $\sigma_\pi$ be the resulting variable assignment. 

Given $\pi$, let 
\[
E_1 = \left\{e \mid e \in E \mbox{ and } \nexists j \in e \setminus \{\pi(e)\}, \exists i: \{\pi(e),j\} \subseteq R_i\right\},
\]
and let $E_2 = E \setminus E_1$.  In other words, $E_1$ is the set of hyperedges whose winner with $\pi$ lies alone in some $R_i$, and $E_2$ is the set of hyperedges $e$ whose winner with $\pi$ lies in some $R_i$ together with some other element(s) of $e$. We now bound the cardinality of $E_2$:
\begin{align*}
|E_2|&\le\sum_{a=2}^k \sum_{i=1}^t \left(\binom{|R_i|}a \cdot \binom{n-|R_i|}{k-a}\right) \\
&\le\sum_{a=2}^k  \sum_{i=1}^t  \frac{|R_i|^a \cdot (n - |R_i|)^{k-a}}{a!\cdot (k-a)!}\\
&\le\sum_{a=2}^k \sum_{i=1}^t  \frac{\left(\frac nt +1\right)^a \cdot n^{k-a}}{a!\cdot (k-a)!}\enspace\le\enspace\sum_{a=2}^k  \sum_{i=1}^t  O\left(\epsilon^{a} \cdot n^{k}\right)\\
&=
\sum_{a=2}^k  O\left(\epsilon^{a-1} \cdot n^{k}\right)\enspace\le\enspace O\left(\epsilon \cdot n^{k}\right)
\enspace =\enspace c' \cdot \epsilon \cdot n^k,
\end{align*}
for some constant $c' > 0$.
Now, for each $e \in E_1$,  the constraint $C_{\pi(e),e}$  will be satisfied, and no other $C_{i_j,e}$, $i_j \in e \setminus \{\pi(e)\}$, will be satisfied. Thus, if $e \in E_1$, then the constraints in $\Gamma_e$ will contribute $\tau - w_e(\pi(e))$ to the  value of the $k$-CSP solution $\sigma_\pi$.
Moreover, if $e \in E_2$, then the constraints in $\Gamma_e$ will not contribute to the value of the $k$-CSP solution $\sigma_\pi$. Then, the value $v(\sigma_\pi)$ of the $k$-CSP solution $\sigma_\pi$ can be lower bounded as
\begin{align*}
v(\sigma_\pi) &= \sum_{e \in E_1} (\tau - w_e(\pi(e)))\\
&\ge \sum_{e \in E_1} (\tau - w_e(\pi(e))) + \sum_{e \in E_2} (\tau - w_e(\pi(e))) - \tau  |E_2|\\
&=  \sum_{e \in E} (\tau - w_e(\pi(e))) - \tau \cdot |E_2|\\
&\ge |E| \cdot \tau - C(\pi) - c' \cdot \tau \cdot \epsilon \cdot n^k.
\end{align*}
Moreover, if $\sigma$ is any assignment to the $k$-CSP variables, let $\pi_\sigma$ be any permutation that ranks the elements decreasingly by assignment value (i.e., $i \prec_{\pi_{\sigma}} j$ if $x_i < x_j$) breaking ties arbitrarily. Let 
$E'_1 = \left\{ e \in E \mid \exists i \in e \, \forall j \in e \setminus \{i\} : x_i > x_j\right\}$. %
Then, if $e \in E'_1$, it holds that the hyperedge $e$ contributes a value of $w_e(\pi_{\sigma}(e))$ to $C(\pi_{\sigma})$. Note that only the constraints of hyperedges in $E'_1$ contribute to $v(\sigma)$. In particular, for $e\in E'_1$, the constraints $\Gamma_e$ contribute a value of $\tau - w_e(\pi_\sigma(e))$ to $v(\sigma)$. Then,
\begin{align*}
|E| \cdot \tau - C(\pi_{\sigma}) &=
\sum_{e \in E} (\tau - w_e(\pi_\sigma(e)))\\
&\ge \sum_{e \in E'_1} (\tau - w_e(\pi_\sigma(e)))
\enspace
= 
\enspace
v(\sigma).
\end{align*}
Then, $C(\pi_{\sigma}) \le |E| \cdot \tau - v(\sigma)$.

Let $\pi^{\star} = \arg\min_{\pi} C(\pi)$ be a permutation that minimizes the WFHS cost. We have proved that
$v\left(\sigma_{\pi^{\star}}\right) \ge |E| \cdot \tau - C(\pi^{\star}) - c' \cdot \tau \cdot \epsilon \cdot n^k$,
i.e.,
$$C(\pi^{\star}) \ge  |E| \cdot \tau - v(\sigma_{\pi^{\star}}) - c' \cdot \tau\cdot  \epsilon \cdot n^k.$$
Now, let $\sigma^{\star} = \arg\max_{\sigma} v(\sigma)$. By $v(\sigma^{\star}) \ge v(\sigma_{\pi^{\star}})$, we have
$$C(\pi^{\star}) \ge  |E| \cdot \tau - v(\sigma^{\star}) - c' \cdot \tau\cdot  \epsilon \cdot n^k.$$
The algorithm in Theorem~\ref{thm:schudy} of \citet{schudy12}, when run on the max-$k$-CSP instance $\mathcal{I}$ returns (in time %
$O(n^k)$)   an assignment $\tilde{\sigma}$ such that $\ex\left[v(\tilde{\sigma})\right] \ge v(\sigma^{\star}) - \epsilon \cdot \tau \cdot n^k$. Then,
\begin{align*}
C(\pi^{\star}) &\ge  |E| \cdot \tau - v(\sigma^{\star}) - c' \cdot \tau\cdot  \epsilon \cdot n^k\\
&\ge  |E| \cdot \tau - \ex\left[v(\tilde{\sigma})\right] - (c'+1) \cdot \tau\cdot  \epsilon \cdot n^k.
\end{align*}
On the other hand, the permutation $\pi_{\tilde{\sigma}}$ will satisfy $\ex\left[C(\pi_{\tilde{\sigma}})\right] \le |E| \cdot \tau - \ex\left[v(\tilde{\sigma})\right]$. Thus,
$$C(\pi^{\star}) \ge  \ex\left[C(\pi_{\tilde{\sigma}})\right] - (c'+1) \cdot \tau\cdot  \epsilon \cdot n^k,$$
or equivalently,
$$\ex\left[C(\pi_{\tilde{\sigma}})\right] \le C(\pi^{\star}) + (c'+1) \cdot \tau \cdot \epsilon \cdot n^k.$$
The algorithm then returns an expected additive $(c'+1) \cdot \tau \cdot \epsilon \cdot n^k$ approximation. Markov's inequality ensures that, if we run the algorithm $r = O(\alpha \log n)$ times, then with probability at least $1 - n^{-\alpha}$, the best of the $r$ returned solutions is a $\left(c \cdot \tau \cdot \epsilon \cdot n^k\right)$-additive approximation for $c = 2(c'+1)$. 

Finally, given that $c$ is a constant and that $\epsilon$ can be chosen arbitrarily,  we can substitute the value of $\epsilon$ with $\frac{\epsilon}c$ to guarantee that the running time will still be not larger than %
$O(\alpha n^k \log n)$
and that the algorithm returns a $(\tau \cdot \epsilon \cdot n^k)$-additive approximation with probability $1 - n^{-\alpha}$.
\end{proof}
In our application, we will have $\tau = \Theta(n^{-k})$, so that the additive error will  be  as small as $\epsilon$ for any constant $\epsilon > 0$.

\section{RECONSTRUCTING RUMS FROM $k$-SLATES}\label{sec:poly_alg}
Using Theorem~\ref{thm:wfhs}, we give next an approximate separation oracle for $\overline{F}_{\rho}$ when $\mathcal{S}$ is the class of all slates of size $k$. Recall that $\overline{F}_{\rho}$ is defined by~(\ref{eqn:feas}).
\begin{theorem}\label{thm:apprx_sep_oracle}
Let $k$ be a constant and let $\mathcal{S} = \binom{[n]}k$.
Fix any constants $\alpha > 0$ and $0 < \epsilon < \frac12$.
Then, there exists a randomized algorithm such that, given as input an assignment
$\{D\} \cup \{\overline{\Delta}_{S,i}\}_{i \in S \in \mathcal{S}}$ to $\overline{F}_{\rho}$, in time $n^{O(k)}$ and with probability at least $1-n^{-\alpha}$: 
(i)  if at least one of the constraints $c_{\rho}$ or $c_{S,i}$ (for $i \in S \in \mathcal{S}$) is unsatisfied, it returns an unsatisfied constraint of $\overline{F}_{\rho}$; or,
(ii) if there exists at least one $\pi \in \mathbf{S}_n$ such that $\sum\limits_{S \in \mathcal{S}} \overline{\Delta}_{S,\pi(S)} < D+1-2\epsilon$, it returns an unsatisfied constraint of type $c_{\pi}$;  otherwise, 
(iii) it might not return any  unsatisfied constraint (even if some exists).
\end{theorem}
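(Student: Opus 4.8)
The plan is to handle the three families of constraints of $\overline{F}_{\rho}$ separately, exploiting that only the $c_{\pi}$ family is exponentially large. First I would dispose of case (i) directly: the constraint $c_{\rho}$ together with the box constraints $c_{S,i}$ number only $O(|\mathcal{S}|)=O(n^k)$, so the oracle can evaluate each of them on the input assignment $\{D\}\cup\{\overline{\Delta}_{S,i}\}$ and return any violated one in $O(n^k)$ time. The substance lies in the $c_{\pi}$ family. The key observation is that checking all of these at once is a minimization over permutations: setting $V=[n]$, $E=\mathcal{S}=\binom{[n]}{k}$, and $w_e(i)=\overline{\Delta}_{e,i}$, the quantity $\sum_{S\in\mathcal{S}}\overline{\Delta}_{S,\pi(S)}$ is \emph{exactly} the WFHS cost $C(\pi)$ of Problem~\ref{prob:wfhs}. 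Because the $c_{S,i}$ bounds force $\overline{\Delta}_{S,i}\in[0,2|\mathcal{S}|^{-1}]$, this is a $(\tau,k)$-bounded instance with $\tau=2|\mathcal{S}|^{-1}$.

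Next I would invoke the approximation algorithm of Theorem~\ref{thm:wfhs} on this instance. The quantitative heart of the argument is that $\mathcal{S}=\binom{[n]}{k}$ yields $|\mathcal{S}|=\binom{n}{k}=\Theta(n^k)$, so $\tau=\Theta(n^{-k})$ and in fact $\tau\,n^k=2n^k/\binom{n}{k}\le 2k^k$, a constant. Running Theorem~\ref{thm:wfhs} with accuracy parameter $\epsilon'=\epsilon/k^k$ (a constant in $(0,\tfrac12)$ since $\epsilon<\tfrac12$ and $k\ge 2$) therefore returns, in time $O(\alpha n^k\log n)=n^{O(k)}$ and with probability at least $1-n^{-\alpha}$, a permutation $\tilde{\pi}$ with $C(\tilde{\pi})\le C(\pi^{\star})+\epsilon'\,\tau\,n^k\le C(\pi^{\star})+2\epsilon$, where $\pi^{\star}=\argmin_{\pi}C(\pi)$. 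The oracle computes $C(\tilde{\pi})=\sum_{S\in\mathcal{S}}\overline{\Delta}_{S,\tilde{\pi}(S)}$ in $O(n^k)$ time, returns the constraint $c_{\tilde{\pi}}$ if $C(\tilde{\pi})<D+1$, and otherwise reports no violation.

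Correctness of case (ii) then follows by a short chain: if some $\pi$ satisfies $\sum_{S}\overline{\Delta}_{S,\pi(S)}<D+1-2\epsilon$, then $C(\pi^{\star})\le C(\pi)<D+1-2\epsilon$, so on the high-probability event $C(\tilde{\pi})\le C(\pi^{\star})+2\epsilon<D+1$ and the oracle returns the violated constraint $c_{\tilde{\pi}}$. Outside this event (probability at most $n^{-\alpha}$), or when no permutation is violated by more than the $2\epsilon$ margin, the oracle is allowed to fail silently, which is exactly case (iii). This is precisely where the $2\epsilon$ slack in the statement is spent: it must absorb the additive error of the WFHS solver.

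The step I expect to require the most care is the calibration of the WFHS accuracy against the weight bound $\tau$. The guarantee of Theorem~\ref{thm:wfhs} is additive in $\tau\,n^k$ rather than in an absolute constant, and it is solely the combinatorial identity $\binom{n}{k}=\Theta(n^k)$ that collapses $\tau\,n^k$ to a constant, enabling a per-call error below the $2\epsilon$ threshold without the running time exceeding $n^{O(k)}$. Everything else—checking the polynomially many $c_{\rho}$ and $c_{S,i}$ constraints, and recasting the $c_{\pi}$ family as a single WFHS instance—is routine.
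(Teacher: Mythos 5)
Your proposal is correct and follows essentially the same route as the paper's proof: check the polynomially many $c_{\rho}$ and $c_{S,i}$ constraints directly, view the $c_{\pi}$ family as a $(\tau,k)$-bounded WFHS instance with $\tau = 2|\mathcal{S}|^{-1}$, run the algorithm of Theorem~\ref{thm:wfhs}, and return $c_{\tilde{\pi}}$ exactly when the approximately optimal permutation violates it, spending the $2\epsilon$ slack of case (ii) on the solver's additive error. If anything, your explicit calibration $\epsilon' = \epsilon/k^k$, bridging the gap between the $\epsilon\cdot\tau\cdot n^k$ guarantee of Theorem~\ref{thm:wfhs} and the desired $2\epsilon$ via $\tau\cdot n^k = 2n^k/\binom{n}{k} \le 2k^k$, is more careful than the paper's proof, which simply asserts an $(\epsilon\cdot\tau\cdot|\mathcal{S}|)$-additive bound and implicitly absorbs this constant factor into the rescaling of $\epsilon$.
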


\begin{proof}
First, we check the validity of every constraint of type $c_{\rho}$ or $c_{S,i}$ (for $i \in S \in \mathcal{S}$); this can be done in time $O(n^{k+1})$. If any one of these constraints is unsatisfied, one of them is returned.

Otherwise, we run the algorithm of Theorem~\ref{thm:wfhs} on the WFHS instance given by the $\{\overline{\Delta}_{S,i}\}_{i \in S \in \mathcal{S}}$. This instance is $(\tau,k)$-bounded with $\tau = 2|\mathcal{S}|^{-1}$. Thus, the algorithm of Theorem~\ref{thm:wfhs} returns,  with probability at least $1 - n^{-\alpha}$, an $\left(\epsilon \cdot \tau \cdot |\mathcal{S}|\right)$-additive approximation, which is a $(2\epsilon)$-additive approximation, in polynomial time. In particular, with that probability, it returns a permutation $\tilde{\pi}$ such that $\sum_{S \in \mathcal{S}} \overline{\Delta}_{S,\tilde{\pi}(S)} \le \left(\min_{\pi} \sum_{S \in \mathcal{S}} \overline{\Delta}_{S,\pi(S)}\right) + 2\epsilon$.

The algorithm then checks the validity of the constraint $c_{\tilde{\pi}}$: if it is violated, it returns $c_{\tilde{\pi}}$ and none otherwise.

Observe that if $\min_{\pi} \sum_{S \in \mathcal{S}} \overline{\Delta}_{S,\pi(S)} < D + 1 - 2\epsilon$, the $\{\overline{\Delta}\}_{i \in S \in \mathcal{S}}$ assignment violates  $c_{\tilde{\pi}}$, which will then be returned.  Otherwise if $\min_{\pi} \sum_{S \in \mathcal{S}} \overline{\Delta}_{S,\pi(S)} \ge D + 1 - 2\epsilon$, the $c_{\tilde{\pi}}$ constraint might or might not be violated.
\end{proof}

Theorem~\ref{thm:apprx_sep_oracle} provides an approximate separation oracle for $\overline{F}_{\rho}$. 
If we plug it into the ellipsoid algorithm~\citep{gls88}, we obtain a polynomial-time algorithm to compute a $\delta$-additive approximation to our RUM fitting problem.
\begin{theorem}\label{thm:avg}
Let $k > 0$ be any integer and let $\delta$ be a constant in $0 < \delta <  1$. Then, Problem~\ref{prob:avg} can be approximated to within an additive value
of $\delta$ in time $n^{O(k)}$, where $n$ is the number of elements of the RUM.
\end{theorem}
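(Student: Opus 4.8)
The plan is to solve the dual LP~(\ref{eqn:dual}) approximately by running the ellipsoid method on the feasibility systems $\overline{F}_{\rho}$ with the approximate separation oracle of Theorem~\ref{thm:apprx_sep_oracle}, and then to read off a concrete RUM from the polynomially many permutations that the oracle produces along the way. I will set the oracle's approximation parameter to $\epsilon = \Theta(\delta)$ and binary search over $\rho \in [0,2]$ (recall $\epsilon_1(\mathcal{P}) \in [0,2]$ since each $\ell_1$-distance is at most $2$) with resolution $\eta = \Theta(\delta)$. The starting observation is that $\overline{F}_{\rho}$ is feasible iff LP~(\ref{eqn:dual}) has value at least $\rho$, i.e.\ iff $\rho \le \epsilon_1(\mathcal{P})$; moreover, the oracle of Theorem~\ref{thm:apprx_sep_oracle} in fact separates with respect to the \emph{relaxed} polytope $\overline{F}'_{\rho}$ in which each $c_{\pi}$ is weakened to $\sum_{S} \overline{\Delta}_{S,\pi(S)} \ge D+1-2\epsilon$, and a short substitution ($\hat D = D - 2\epsilon$) shows that $\overline{F}'_{\rho}$ is feasible iff $\rho \le \epsilon_1(\mathcal{P}) + 2\epsilon$.

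The second step is to feed the oracle to the ellipsoid method as a weak separation oracle. Every constraint the oracle returns is genuinely violated for $\overline{F}_{\rho}$ --- for $c_{\rho}$ and the $c_{S,i}$ this is checked directly in $O(n^{k+1})$ time, and a returned $c_{\tilde\pi}$ can likewise be verified in $O(n^{k})$ time --- so the cuts fed into the ellipsoid are always valid separating hyperplanes for $\overline{F}_{\rho}$, regardless of the oracle's internal randomness. By Theorem~\ref{thm:apprx_sep_oracle}, whenever the query point lies outside $\overline{F}'_{\rho}$ the oracle returns such a cut with probability at least $1-n^{-\alpha}$. Consequently, for every $\rho > \epsilon_1(\mathcal{P}) + 2\epsilon$ the relaxed polytope $\overline{F}'_{\rho}$ is empty, so every query is outside it and the ellipsoid receives a valid cut at each step; by the standard analysis (using that the data has polynomial bit-complexity: $|\mathcal{S}| = \binom{n}{k} = n^{O(k)}$ variables, box bounds $[0, 2|\mathcal{S}|^{-1}]$, and polynomially encoded $D_S(i)$) it halts after $n^{O(k)}$ iterations, producing a polynomial-size infeasible subsystem of $\overline{F}_{\rho}$ consisting of $c_{\rho}$, the $c_{S,i}$, and finitely many $c_{\pi_1},\ldots,c_{\pi_m}$ with $m = n^{O(k)}$.

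The third step recovers the RUM. Binary search yields a threshold $\rho^{\star}$, the smallest grid value reported infeasible, satisfying $\epsilon_1(\mathcal{P}) < \rho^{\star} \le \epsilon_1(\mathcal{P}) + 2\epsilon + \eta$ (for $\rho \le \epsilon_1(\mathcal{P})$ the system is feasible and, since all cuts are valid for it, the ellipsoid must land a point declared feasible rather than certify infeasibility). At $\rho^{\star}$, infeasibility of the subsystem on $\{\pi_1,\ldots,\pi_m\}$ means exactly that LP~(\ref{eqn:dual}) restricted to these permutations has value $< \rho^{\star}$; by strong duality for this polynomial-size primal--dual pair, the primal LP~(\ref{eqn:prelp}) restricted to the variables $\{p_{\pi_1},\ldots,p_{\pi_m}\}$ (all other $p_\pi=0$) also has value $< \rho^{\star}$. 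Solving this restricted primal exactly --- a polynomial-size LP --- therefore returns a RUM supported on $\pi_1,\ldots,\pi_m$ whose average $\ell_1$-error is below $\rho^{\star} \le \epsilon_1(\mathcal{P}) + 2\epsilon + \eta$. Choosing $\epsilon = \delta/4$ and $\eta = \delta/2$ gives the claimed $\epsilon_1(\mathcal{P}) + \delta$ bound, and since each oracle call runs in $n^{O(k)}$ time and there are $n^{O(k)}$ of them across the $O(\log(1/\eta))$ binary-search rounds, the total running time is $n^{O(k)}$.

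Finally, the randomization is handled by a union bound: each oracle call fails with probability at most $n^{-\alpha}$, and there are $n^{O(k)}$ calls in total, so taking $\alpha$ to be a sufficiently large constant (exceeding the exponent hidden in $n^{O(k)}$) makes all calls succeed with high probability, and on that event every step above is valid. The step I expect to be the main obstacle is precisely this marriage of the ellipsoid method with an oracle that is simultaneously \emph{approximate} (the $2\epsilon$ slack), \emph{randomized}, and \emph{one-sided} (case (iii): it may fail to cut a point that is truly infeasible for $\overline{F}_{\rho}$). The crux is to argue that these defects inflate the recovered value by only the additive $O(\epsilon)$ coming from the relaxation $\overline{F}'_{\rho}$ --- so that the \emph{primal} RUM actually extracted provably meets the $\epsilon_1(\mathcal{P}) + \delta$ guarantee --- and to confirm that the ellipsoid's volume shrinkage really yields a genuine finite infeasibility certificate, for which I would invoke the weak-separation/weak-optimization machinery of~\citep{gls88}.
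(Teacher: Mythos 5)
Your proposal is correct and follows essentially the same route as the paper's proof: binary search over a $\Theta(\delta)$-grid of $\rho$ values, the ellipsoid method driven by the approximate separation oracle of Theorem~\ref{thm:apprx_sep_oracle} with $\epsilon = \delta/4$ (randomness absorbed by a union bound over the polynomially many calls), a $2\epsilon$-shift of the variable $D$ to relate the relaxed and exact versions of $\overline{F}_{\rho}$ (the paper's point $x'$), and recovery of the RUM by solving the primal LP~(\ref{eqn:prelp}) restricted to the polynomially many permutations produced as cuts. Your explicit strong-duality argument---deducing the restricted primal's value from the infeasible subsystem at the smallest rejected grid value---is in fact a slightly more careful rendering of the recovery step that the paper merely asserts; otherwise the two arguments coincide.
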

\begin{proof}
We describe \ipalg, a polynomial-time algorithm with the stated complexity that uses the ellipsoid algorithm of \citet{gls88} as a subroutine. As it is well-known, the ellipsoid algorithm uses a separation oracle as a black-box. In our case, the black-box is the approximate separation oracle of Theorem~\ref{thm:apprx_sep_oracle}.

First, \ipalg guesses $\rho \in \left\{i \cdot \delta/2 \mid 0 \le i \le \left\lceil\frac4\delta\right\rceil\right\}$ (the algorithm performs a binary search among the values in this set). For a given $\rho$, the ellipsoid algorithm is invoked together with the approximate separation oracle of Theorem~\ref{thm:apprx_sep_oracle} to approximately check the non-emptiness of $\overline{F}_{\rho}$. In particular, the ellipsoid algorithm will call the separation oracle at most $n^{O(k)}$ many times\footnote{In our separation oracle, we set $\alpha = c\cdot k$ for some constant $c$; this guarantees that each call to the separation oracle will have the approximation properties of Theorem~\ref{thm:apprx_sep_oracle} with high probability.} returning at most polynomially many separating hyperplanes.  If such a set defines an infeasible LP, the ellipsoid algorithm correctly concludes that $\overline{F}_{\rho}$ is empty. 

Otherwise,  the ellipsoid algorithm  returns a point $x = (D) \,.\, (\overline{\Delta}_{S,i})_{i \in S \in \mathcal{S}}$ that the oracle was unable to separate from $\overline{F}_{\rho}$.
This point could lie inside of $\overline{F}_{\rho}$, or outside of it, since the oracle only guarantees that the constraints of type $c_{\pi}$ hold to within an additive error of twice $\epsilon \triangleq \delta/4$. %
Let us define the point $x' = (D - \delta/2) \,.\, (\overline{\Delta}_{S,i})_{i \in S \in \mathcal{S}}$.
We prove that $x' \in \overline{F}_{\rho - \delta/2}$. Indeed, each $c_{\pi}$ constraint is satisfied by $x'$ ($c_{\pi}$ is off by at most $2\epsilon=\delta/2$ with the solution $x$, and the RHS of the $c_{\pi}$ constraint decreases by $\delta/2$ when switching from $x$ to $x'$). Moreover,   the $c_{\rho-\delta/2}$ constraint of $\overline{F}_{\rho - \delta/2}$ is satisfied by $x'$ ($c_{\rho}$ is satisfied by $x$, thus $c_{\rho -\delta/2}$ is satisfied by $x'$). Each remaining constraint is also satisfied.

Let $i^{\star}$ be the largest $i$ for which the algorithm establishes that $x' \in \overline{F}_{\rho^{\star} - \delta/2}$ where $\rho^{\star} = i^{\star} \cdot \delta/2$.\footnote{Such an $i^{\star}$ exists since the maximum $\ell_1$-distance between two probability distributions is $2 \le \lceil \frac4\delta\rceil \cdot \frac{\delta}2$.}
Then, the dual LP~(\ref{eqn:dual}) does not admit a solution of value at least  $\rho^{\star}+\delta/2$, but admits a solution of value at least $\rho^{\star}-\delta/2$. It follows that the optimal solution of the dual LP~(\ref{eqn:dual}), and thus of the primal LP~(\ref{eqn:prelp}), lies in $[\rho^{\star}-\delta/2,\rho^{\star}+\delta/2]$.
Hence, the ellipsoid algorithm with the above separation oracle, returns a solution that approximates the optimal solution of Problem~\ref{prob:avg} to within $\delta$. 

Finally, to recover an approximating RUM whose average-distance error is at most the smallest possible plus $\delta$, 
\ipalg acts as follows. Consider the run of the ellipsoid algorithm with $\rho = \rho^{\star}$. In this run,  the ellipsoid algorithm calls the separation oracle at most polynomially many times and returns no more than polynomially many separating hyperplanes. Some of these hyperplanes might refer to non-permutation constraints, and the rest refer to  the permutation constraints of, say, permutations $\pi_1, \ldots, \pi_t$ (for $t \le n^{O(k)}$). By restricting the primal LP (\ref{eqn:prelp}) to  its non-permutation variables and to the permutation variables $p_{\pi_1}, \ldots, p_{\pi_t}$, we obtain an LP of size $n^{O(k)}$ (i.e., solvable in  time $n^{O(k)}$), and whose optimal value is at most $\delta$ plus the optimum of the primal LP (\ref{eqn:prelp}). Thus, solving the restricted LP allows us to obtain a RUM with an error no greater than the smallest possible plus $\delta$.
\end{proof}

\paragraph{Succinct Representation.}
As shown in~\citet{ckt21}, every RUM can be sketched to $O(\epsilon^{-2} \cdot k \cdot n \log^2 n)$ bits in such a way  that the probability distribution of {\em each} slate of size at most $k$ is approximated to within an $\ell_1$-error of $\epsilon$. This sketch is a RUM over $O(\epsilon^{-2} \cdot k \cdot \log n)$ permutations.  Consequently, the approximately optimal RUM returned by the algorithm of Theorem~\ref{thm:avg} can be reduced (with the same $O(\delta)$ additive error with respect to the optimal approximating RUM) to a RUM supported by $O(\delta^{-2} \cdot k \cdot \log n)$ permutations.

\begin{algorithm}[ht]
\small
\begin{algorithmic}[1]
\STATE $P \leftarrow \varnothing$
\STATE $\pi^{\star} \leftarrow$ any permutation from $\mathbf{S}_n$
\REPEAT
\STATE $P \leftarrow P \cup \{\pi^{\star}\}$
\STATE Solve the primal LP (\ref{eqn:prelp}) restricted to the variables $\epsilon_{S,i}$ for $i \in S \in \mathcal{S}$, and $p_\pi$ for $\pi \in P$; let $\mathcal{P}$ be its optimal primal solution, and $\mathcal{D}$ be its optimal dual solution (i.e., the solution of LP (\ref{eqn:dual}))
\STATE $\pi^{\star} \leftarrow$ \ViolatedHyperplane($\mathcal{D}$)
\UNTIL{$\pi^{\star} = \bot$}
\RETURN the RUM that samples $\pi \in P$ with probability $\mathcal{P}(p_\pi)$ and $\pi \in \mathbf{S}_n \setminus P$ with probability 0.
\end{algorithmic}
\caption{A heuristic for Problem~\ref{prob:avg} \halg
\citep{ackpt22}.
\label{alg:heuristic}}
\end{algorithm}

\begin{algorithm}[h]
\small
\begin{algorithmic}[1]
\STATE For $\pi\in \mathbf{S}_n$, let $\wfhs(\pi)  =  \sum_{S\in\mathcal{S}} \mathcal{D}(\Delta_{S,\pi(S)})$ 
\STATE For $\pi\in \mathbf{S}_n$, let $N(\pi)$ be the set of permutations that can be obtained from $\pi$ by moving one of its elements %
\STATE let $0<t'\leq t$ be two integers
\STATE $\wfhs_{\min} \leftarrow \infty$
\FOR{$i = 1, \ldots, t$}
\STATE $\pi \gets$ uniform at random permutation from $\mathbf{S}_n$
\WHILE{$\exists \pi' \in N(\pi)$ such that $\wfhs(\pi') < \wfhs(\pi)$}
\STATE $\pi \leftarrow \arg\min_{\pi' \in N(\pi)} \wfhs(\pi')$
\ENDWHILE
\IF{$\wfhs(\pi) < \wfhs_{\min}$}
\STATE $\wfhs_{\min} \leftarrow \wfhs(\pi)$
\STATE $\pi_{\min} \leftarrow \pi$
\ENDIF
\IF{$\wfhs_{\min} < \mathcal{D}(D)$ \AND $i\ge t'$}
\RETURN $\pi_{\min}$
\ENDIF
\ENDFOR
\RETURN $\bot$
\end{algorithmic}
\caption{\label{alg:localsearch}A randomized local-search for \ViolatedHyperplane. In experiments, we set $t=100$ and $t'=5$.}
\end{algorithm}

\section{A NON-ADAPTIVE LOWER BOUND}\label{sec:nalb}

Our reconstruction algorithm leverages on knowing the winning distribution of each slate of size $k$. We show here that a non-adaptive algorithm that aims to approximate the winning distribution of each slate of size $k$, must access a constant fraction of slates  of size $k$ during learning.
\begin{theorem}\label{thm:nalb}
Let $\mathcal{A}$ be a non-adaptive algorithm that only queries an $\epsilon$ fraction of the slates of size $k \ge 2$. Then, with probability at least $1-\epsilon$, the expected $\ell_1$-error of $\mathcal{A}$'s prediction on at least one slate of size $k$ is at least $2 - \frac2k$.
\end{theorem}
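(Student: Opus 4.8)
The plan is to prove the bound by a Yao-type averaging argument: I will exhibit a distribution over hard instances (RUMs) on which every non-adaptive learner that inspects only an $\epsilon$-fraction of the slates must incur large expected error on some unqueried slate. The driving observation is a symmetric $k$-point identity. Fix a slate $S=\{a_1,\dots,a_k\}$ and suppose an adversary can present $k$ instances $R_1,\dots,R_k$ that are \emph{identical} on the winner distribution of every slate other than $S$, but for which the winner distribution on $S$ is the point mass $\delta_{a_j}$ under $R_j$. Then for \emph{any} prediction $p$ the learner makes on $S$ we have $|p-\delta_{a_j}|_1 = 2(1-p_{a_j})$, so averaging over $j$ uniform in $[k]$ gives $\frac1k\sum_{j=1}^k |p-\delta_{a_j}|_1 = \frac2k\bigl(k-\sum_j p_{a_j}\bigr)=2-\frac2k$, independently of $p$. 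Hence if the learner cannot tell the $R_j$ apart, its expected error on $S$ is exactly $2-\frac2k$, and in particular at least one $R_j$ forces error at least $2-\frac2k$. Note this beats the $1$ that a mere two-point triangle inequality would give, which is why the full $k$-point family is needed.

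First I would build the indistinguishable family. The key combinatorial fact is that, since every slate has size exactly $k=|S|$, the slate $S$ is the \emph{only} size-$k$ slate contained in $S$; every other slate $S'\neq S$ must contain at least one element of the background $B:=[n]\setminus S$. I therefore take each $R_j$ to be the deterministic RUM given by a single permutation $\pi_j$ that ranks all of $B$ above all of $S$ (with $B$ in one fixed internal order) and, inside the bottom block, places $a_j$ at the top of $S$ (the remaining elements of $S$ in a fixed order). Because every $S'\neq S$ contains a background element that outranks all of $S$, the winner $\pi_j(S')$ is a background element determined solely by the fixed order of $B$, so it does not depend on $j$; meanwhile $\pi_j(S)=a_j$. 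Thus $R_1,\dots,R_k$ agree on the winner distribution of every slate except $S$, exactly as required, and each is a legitimate RUM. This even grants the learner full distributional access on its queried slates, strengthening the bound.

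To obtain the claimed probability I would randomize the hard slate: draw $S^\star$ uniformly from $\binom{[n]}k$ and an index $j$ uniformly from $[k]$, and let the instance be $R_j$ built for $S^\star$. Since the query set $Q$ has size at most $\epsilon\binom nk$ and is chosen independently of $(S^\star,j)$ (for a randomized learner, condition on its internal randomness and invoke Yao), we get $\Pr[S^\star\notin Q]\ge 1-\epsilon$. Conditioned on $S^\star\notin Q$, every queried slate differs from $S^\star$, so by the construction the observed winners are independent of $j$; hence the learner's prediction on $S^\star$ is a fixed distribution $p$, and the identity above yields expected error (over $j$) exactly $2-\frac2k$ on the slate $S^\star$, which is the claimed statement. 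The main obstacle is precisely the construction of the hard family: making the $k$ instances \emph{simultaneously} point masses on $S$ yet perfectly indistinguishable off $S$. The resolution hinges on the observation that no size-$k$ slate other than $S$ contains all of $S$, so burying $S$ beneath the background hides every internal reordering of $S$ from all other slates.
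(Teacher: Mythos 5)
Your proposal is correct and follows essentially the same route as the paper: the identical hard family of single-permutation RUMs burying a uniformly random slate $S$ beneath the background $[n]\setminus S$, the same indistinguishability observation that every other $k$-slate contains a winning background element, and the same $\Pr[S^\star\notin Q]\ge 1-\epsilon$ step. Your only (welcome) deviation is cosmetic: you average $|p-\delta_{a_j}|_1=2(1-p_{a_j})$ over uniform $j$ to get exactly $2-\frac2k$ for \emph{every} prediction $p$ supported on $S$, which sidesteps the paper's appeal to the uniform vector being the geometric median of the $k$ point masses.
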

\begin{proof}
This result can be proved with a very simple RUM. Let $S$ be a slate sampled uniformly at random from the class $\binom{[n]}k$, and let $i$ be sampled uniformly at random from the slate $S$. The RUM $R = R^{i,S}$ will be supported by a single permutation $\pi_{i, S}$ that has the element of $[n] \setminus S$ in its first $n - |S|$ positions (sorted arbitrarily), element $i$ in its $(n-|S|+1)$st position, and the element of $S \setminus \{i\}$ in its last $|S|-1$ positions (again, sorted arbitrarily).

Observe that $R_S(i) = 1$ and $R_S(j) = 0$, for each $j \in S \setminus  \{i\}$. Moreover, if one queries $R$ on any slate in $\binom{[n]}k \setminus \{S\}$, one is unable to tell which element of $S$ ranks highest in $\pi$ and, thus, in permutations sampled from $R$. Indeed, the elements of $S$ lie in the last $k$ positions of $\pi$ and thus no slate of size $k$ other than $S$ will  result in some element of $S$ winning.

Thus, a non-adaptive algorithm that did not query $S$ in its learning phase, can correctly guess that one element $i'$ of $S$ will always win in $S$.  However, from the algorithm's perspective, $\Pr[j = i'] = \frac1{|S|}$ for each $j \in S$. Thus, if the goal of the algorithm is to minimize the expected $\ell_1$-error on its guess for $D_S$, it should return the uniform vector $\left(\frac1{|S|},\ldots,\frac1{|S|}\right)$, since it is the (geometric) median of the $k$ possible distributions for $D_S$, and since each of these distributions is equally likely. The expected $\ell_1$-error on $D_S$  of any non-adaptive algorithm that does {\em not} query $S$ during its learning phase is then at least $1 \cdot \left(1 - \frac{1}{|S|}\right) + \left(|S|-1\right) \cdot \left(\frac1{|S|} - 0\right) = 2 - \frac{2}{|S|} = 2 - \frac2k$.

Moreover, if the non-adaptive algorithm queries only $\epsilon \cdot \binom n k$ slates during its learning phase, then it will query $S$ with probability at most $\epsilon$. If this event does not happen, the algorithm's expected $\ell_1$-error on $S$ is $\geq 2 \cdot \left(1 - \frac1k\right)$.
\end{proof}

\ifallproofs
\fi

\begin{table}[h!]
{\small
\centering
\begin{tabular}{@{}lr|rr|ll|ll@{}}\toprule
\multirow{3}{*}{Dataset} & \multirow{3}{*}{$n$} & \multirow{3}{*}{$k$} & \multirow{3}{*}{$|P|$} & \multirow{2}{*}{average} & \multirow{2}{*}{lower}  & MNL  & MNL \\ 
&  &  &  &  \multirow{2}{*}{error} & \multirow{2}{*}{bound} & (class.) & (d.c.) \\
&  &  &  &  &  & avg. err. & avg. err. \\
\hline
\multirow{4}{*}{\Sushi} & \multirow{4}{*}{10} & 2 & 46 & 0 & 0 & $2\cdot 10^{-5}$ & 0.0543\\ 
 &  & 3 & 241 & 0 & 0 & 0.0389 & 0.0691 \\
 &  & 4 & 631 & 0 & 0 & 0.0354 & 0.0777\\
 &  & 5 & 916 & 0.0002 & 0 & 0.0282 & 0.0798\\ \hline
\multirow{4}{*}{\SFwork} & \multirow{4}{*}{6} & 2 & 16 & 0 & 0 & $2\cdot 10^{-5}$ & 0.0595 \\
 & & 3 & 36 & 0.0044 & 0.0044 & 0.0317 & 0.0756 \\
 & & 4 & 35 & 0.0072 & 0.0072 & 0.0235 & 0.0597 \\
 & & 5 & 20 & 0.0035 & 0.0035 & $2\cdot 10^{-5}$ & 0.0338 \\ \hline
\multirow{4}{*}{\SFshop} & \multirow{4}{*}{8} & 2 & 29 & 0 & 0 & $2\cdot 10^{-5}$ & 0.0186\\ 
 & & 3 & 109 & 0.0002 & 0.0002 & 0.0493 & 0.0314 \\
 & & 4 & 195	& 0.0010 & 0.0010 & 0.0348 & 0.0352\\
 & & 5 & 192	& 0.0017 & 0.0017 & 0.0203 & 0.0321\\ \hline 
\multirow{4}{*}{\Afive} & \multirow{4}{*}{16} & 2 & 121 & 0 & 0 & $1 \cdot 10^{-5}$ & 0.0473 \\ 
 & & 3 & 951 & 0.0005 & 0 & 0.0443 & 0.0666 \\
 & & 4 & 1044 & 0.0105 & 0 & 0.0497 & 0.0811 \\
 & & 5 & 1001 & 0.0207 & 0.0078 & 0.0542 & 0.0951 \\ \hline  
\multirow{4}{*}{\Anine} & \multirow{4}{*}{12} & 2 & 67 & 0 & 0 & $1 \cdot 10^{-5}$ & 0.0465 \\ 
& & 3 & 441 & 0 & 0 & 0.0318 & 0.0626 \\
& & 4 & 1053 & 0.0014 & 0 & 0.0347 & 0.0728 \\
& & 5 & 1032 & 0.0080 & 0 & 0.0387 & 0.0844 \\ \hline
\multirow{4}{*}{\Aseventeen} & \multirow{4}{*}{13} & 2 & 79 & 0 & 0 & $3 \cdot 10^{-5}$ & 0.0527 \\
& & 3 & 573 & 0 & 0 & 0.0441 & 0.0800\\
& & 4 & 829	& 0.0127 & 0.0038 & 0.0603 & 0.1051 \\
& & 5 & 843	& 0.0281 & 0.0253 & 0.0732 & 0.1309\\ \hline
\multirow{4}{*}{\Afortyeight} & \multirow{4}{*}{10} & 2 & 46 & 0 & 0 & $1 \cdot 10^{-5}$ & 0.0384\\
& & 3 & 241 & 0 & 0 & 0.0319 & 0.0630\\
& & 4 & 561 & 0.0026 & 0.0002 & 0.0392 & 0.0891\\
& & 5 & 433 & 0.0224 & 0.0208 & 0.0507 & 0.1155\\ \hline
\multirow{4}{*}{\Aeightyone} & \multirow{4}{*}{11} & 2 & 56 & 0 & 0 & $2 \cdot 10^{-5}$ & 0.0520\\ 
& & 3 & 325 & 0.0005 & 0.0005 & 0.0484 & 0.0944\\
& & 4 & 513 & 0.0250 & 0.0233 & 0.0659 & 0.1255\\
& & 5 & 425 & 0.0535 & 0.0521 & 0.0859 & 0.1540\\ \bottomrule
\end{tabular}
\caption{Results of the fitting experiments. $|P|$ is the size of the support found by \halg. %
For some datasets, we found a non-trivial lower bound on the average error achievable via a RUM. The last two columns represent, respectively, the average $\ell_1$-error of the (classifier) MNL model and the classical discrete choice MNL model. \label{tab:fitting}}
}
\end{table}

\section{EXPERIMENTS}\label{sec:exp}

We perform two types of experiments\footnote{code available at \href{https://github.com/mirkogiacchini/k-wise-RUMs}{\url{https://github.com/mirkogiacchini/k-wise-RUMs}}}. In Section \ref{sec:exp-rum-fit} we are given as input a set of slates and for each slate a winner, chosen by a user, over its elements. We evaluate the quality of our algorithm in representing the resulting winner distributions by measuring the $\ell_1$-error between the winner distributions induced by the dataset and those given by the RUM found by the algorithm. 
Next, in Section \ref{sec:exp-gen}, we consider a prediction setting where the data is split into training and test set. We learn a RUM using our algorithm on the training set and evaluate its generalization quality on the test set.

\subsection{Experimental Setup}

For practical reasons we did not implement the polynomial-time algorithm \ipalg, but based on similar ideas, used the heuristic \halg described in Algorithm~\ref{alg:heuristic}, which is from~\citet{ackpt22}. The heuristic needs access to a separation oracle (\ViolatedHyperplane) that, as shown in Section \ref{sec:poly_alg}, can be seen as an instance of the WFHS problem. We implemented such an oracle both with an exact algorithm running in time $O(n^k2^n)$ (which we describe in the Appendix, and that is a generalized version of the exact algorithm of \citet{l64} for the FAS problem) and also using a randomized local-search heuristic described in Algorithm \ref{alg:localsearch}; the latter is a generalization of the algorithm in~\citet{ackpt22}. We were able to use the exact oracle on most of the datasets, but when it turned out to be too slow, we resorted to the local-search (in particular, we needed the local-search on dataset \Afive). Note that, when provided with an exact oracle, \halg is guaranteed to find the optimal RUM, however, we have no (non-trivial) upper bound on its running time. On the other hand, when we use the local-search heuristic, \halg loses also its guarantee to converge to the optimal RUM, since the local-search might fail at finding a separating hyperplane even if it exists. 
We implemented \halg in Python using IBM cplex\footnote{\url{https://www.ibm.com/analytics/cplex-optimizer}} and we ran it on general-purpose hardware\footnote{Intel core i7, 8GB of RAM}.

\par{\emph{Baselines.}} We used two baselines both in the fitting experiments and in the prediction ones. The first one is the classical MNL model used in discrete choice \citep{btl}. We used the implementation of \citet{ru16}. The second baseline casts the problem as a classification task and uses the classical multinomial logit classifier. We used the scikit-learn\footnote{\url{https://scikit-learn.org/stable/}} implementation of this model. Note that the second model is not a RUM. 

\par{\emph{Slate sizes.}}
We ran the experiments on sets of equal-sized slates, in particular, we considered slates of size between 2 and 5, i.e., $\mathcal{S}\subseteq\binom{[n]}{k}$ for $k\in\{2,3,4,5\}$. Note that we allowed missing $k$-slates (i.e., possibly  $\mathcal{S}\subsetneq\binom{[n]}{k}$). 

\par{\emph{Datasets.}} We performed the experiments on the following:

(i) \Sushi dataset \citep{k03} contains a list of permutations over 10 elements, representing people's preferences over different types of sushi. Since we only care about $k$-slates, we transformed each permutation into $\binom{n}{k}$ different $k$-slates setting the winner according to the permutation. Note that a RUM with error 0 exists on such a dataset.

(ii) Datasets \SFwork and \SFshop \citep{kb06} contain a list of choices between different transportation alternatives made by people going to work or to a shopping center, respectively. Since these datasets contain only a few slates of fixed size, we augmented them in the following way: each slate $S$, $|S|\geq k$, with winner $w\in S$, is transformed into $\binom{|S|-1}{k-1}$ slates of size $k$, each having $w$ as a winner. This transformation might induce a bias, however, it seems reasonable in practice. 

(iii) Datasets \Afive, \Anine, \Aseventeen, \Afortyeight, \Aeightyone \citep{election} contain lists of election ballots, which are partial ordering of the elements (i.e., each ballot is a sorted subset of $S$). As in \Sushi, we converted the sorted subsets into several $k$-slates assigning the winner according to the ordering.

\begin{table}[ht]
{\small
\centering
\begin{tabular}{@{}lr|r|c|ll|l@{}}\toprule
\multirow{2}{*}{Dataset} & \multirow{2}{*}{$n$} & \multirow{2}{*}{$k$} & \multirow{2}{*}{\halg} & MNL & MNL & Train \\ 
&  &  &  & (class.) & (d.c.) & Tensor   \\ \hline
\multirow{4}{*}{\Sushi} & \multirow{4}{*}{10} & 2 & 0.023 & 0.023 & 0.054 & 0.023 \\ 
&  & 3 & 0.027 & 0.037 & 0.056 & 0.027 \\
&  & 4 & 0.028 & 0.033 & 0.054 & 0.029 \\
&  & 5 & 0.028 & 0.030 & 0.051 & 0.030 \\ \hline
\multirow{4}{*}{\SFwork} & \multirow{4}{*}{6} & 2 & 0.091 & 0.088 & 0.093 & 0.088 \\ 
&  & 3 & 0.094 & 0.087 & 0.096 & 0.094 \\
&  & 4 & 0.085 & 0.074 & 0.079 & 0.081 \\
&  & 5 & 0.071 & 0.072 & 0.064 & 0.072 \\ \hline
\multirow{4}{*}{\SFshop} & \multirow{4}{*}{8} & 2 & 0.081 & 0.080 & 0.070 & 0.081 \\ 
&  & 3 & 0.066 & 0.067 & 0.058 & 0.066 \\
&  & 4 & 0.060 & 0.057 & 0.055 & 0.062 \\
&  & 5 & 0.056 & 0.051 & 0.051 & 0.058 \\ \hline
\multirow{4}{*}{\Anine} & \multirow{4}{*}{12} & 2 & 0.046 & 0.046 & 0.055 & 0.046 \\ 
&  & 3 & 0.058 & 0.055 & 0.066 & 0.059 \\
&  & 4 & 0.065 & 0.062 & 0.074 & 0.071 \\
&  & 5 & 0.070 & 0.069 & 0.081 & 0.081 \\ \hline
\multirow{4}{*}{\Aseventeen} & \multirow{4}{*}{13} & 2 & 0.107 & 0.106 & 0.100 & 0.107 \\ 
&  & 3 & 0.141 & 0.128 & 0.132 & 0.147 \\
&  & 4 & 0.156 & 0.150 & 0.157 & 0.177 \\
&  & 5 & 0.170 & 0.168 & 0.178 & 0.201 \\ \hline
\multirow{4}{*}{\Afortyeight} & \multirow{4}{*}{10} & 2 & 0.071 & 0.071 & 0.070 & 0.071 \\ 
 &  & 3 & 0.094 & 0.084 & 0.091 & 0.094 \\ 
&  & 4 & 0.105 & 0.097 & 0.109 & 0.114 \\ 
&  & 5 & 0.117 & 0.112 & 0.127 & 0.132 \\ \hline
\multirow{4}{*}{\Aeightyone} & \multirow{4}{*}{11} & 2 & 0.091 & 0.090 & 0.089 & 0.091 \\ 
&  & 3 & 0.121 & 0.113 & 0.124 & 0.126 \\
&  & 4 & 0.143 & 0.138 & 0.151 & 0.158 \\
&  & 5 & 0.168 & 0.166 & 0.180 & 0.193 \\ \bottomrule
\end{tabular}
\caption{Results of the prediction experiments. We run a 5-fold cross validation with 10 different seeds for each dataset and algorithm. The table reports the (avg.) RMSE of each algorithm. The standard deviations are in  $[0.012, 0.025]$ for \SFwork and in  $[0.001, 0.009]$ for others. \label{tab:generalization}}
}
\end{table}

\subsection{RUM Fitting} \label{sec:exp-rum-fit}
The results of the fitting experiments are shown in Table~\ref{tab:fitting}. We let \halg run for a maximum of 1500 iterations and stop it earlier if the average error decreased by less than $10^{-5}$ in 20 iterations (meaning that the algorithm converged). Note that our heuristic always obtains a smaller error than both baselines, and it seems therefore more suitable for representing the datasets. 

Due to the use of the local-search heuristic and to the fact that we stop the algorithm after a fixed number of iterations, \halg will not, in general, converge to the optimal RUM. However, it is possible to find lower bounds on the best possible average error achievable via a RUM. Consider the dual LP (\ref{eqn:dual}) restricted to have only the permutation constraints of $P$, $P\subseteq \mathbf{S}_n$. Consider an optimal solution $\mathcal{D}$ to such restricted LP having value $v$ and suppose we know, via an exact algorithm for WFHS, that $y = \min_{\pi\in\mathbf{S}_n}{\sum_{S\in\mathcal{S}}\mathcal{D}(\Delta_{S, \pi(S)})}$. If $y \geq \mathcal{D}(D)$, then $v$ is the optimal value of the original dual LP (\ref{eqn:dual}), and therefore of the primal LP (\ref{eqn:prelp}). Otherwise, note that the point $(y)\,.\, (\Delta_{S, i})_{i\in S \in \mathcal{S}}$ is a feasible solution to the dual LP (\ref{eqn:dual}) and it has value $x = v - (\mathcal{D}(D) - y)$, therefore $x$ is a lower bound on the optimal value of the primal LP (\ref{eqn:prelp}). \balance Fortunately, this trick to find a lower bound can be done at the end of the fitting procedure, so to run the exact algorithm for WFHS only once. Using these simple remarks we were able to find non-trivial lower bounds on the average error for several of the considered datasets.  In practice, \halg often finds the optimal (or almost-optimal) RUM; furthermore, the error of \halg seems to increase as we increase the size of the slates. We analyzed how the errors are distributed (see Figure \ref{fig:error-distributon}) and it seems that the majority of the slates incur an error below the average, in particular, note that most of the 4-slates have an error close to 0.

\begin{figure}[t]
\begin{center}
\subfigure[$k=4$]{\centerline{\includegraphics[width=7cm]{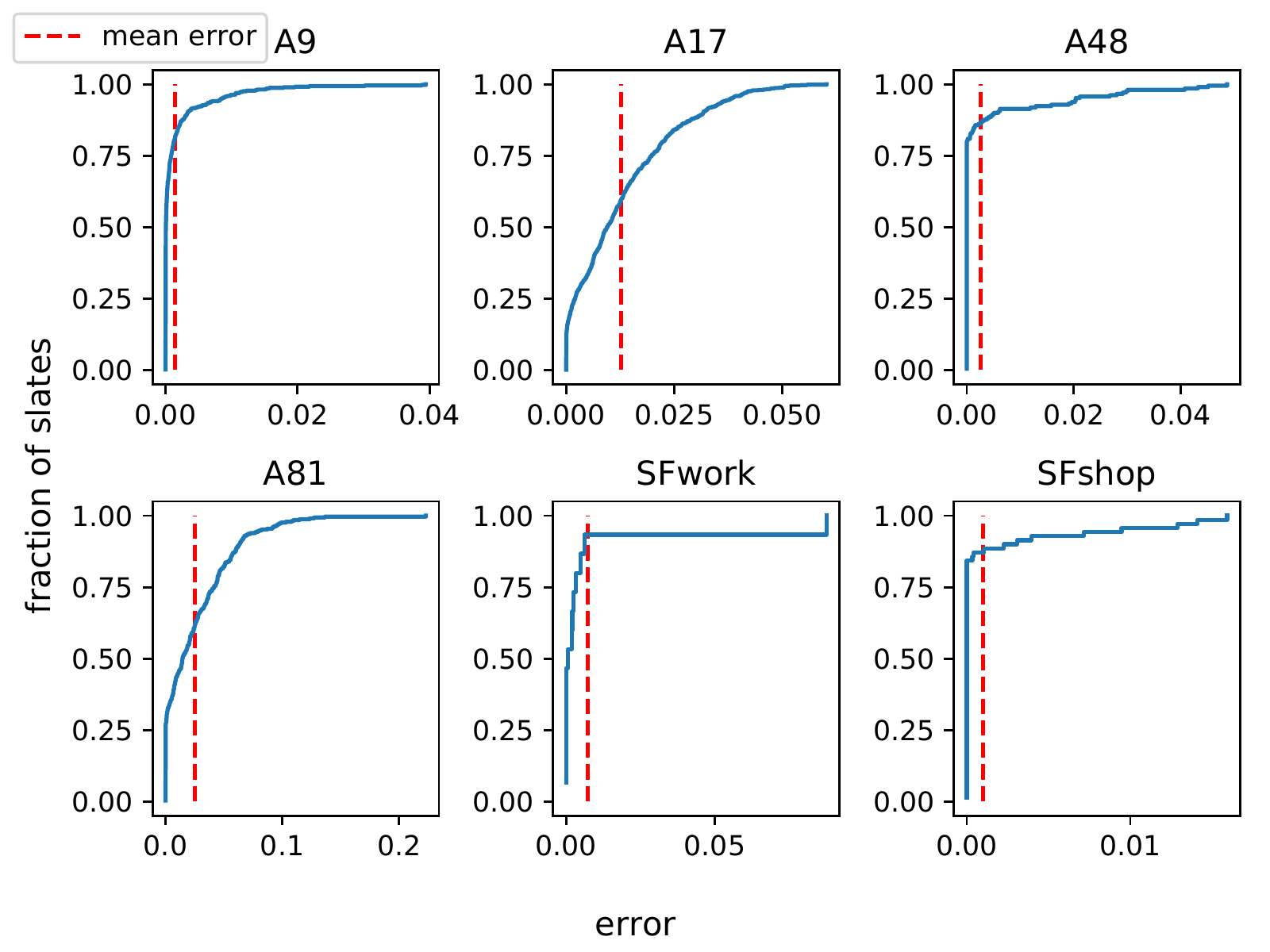}}}
\subfigure[$k=5$]{\centerline{\includegraphics[width=7cm]{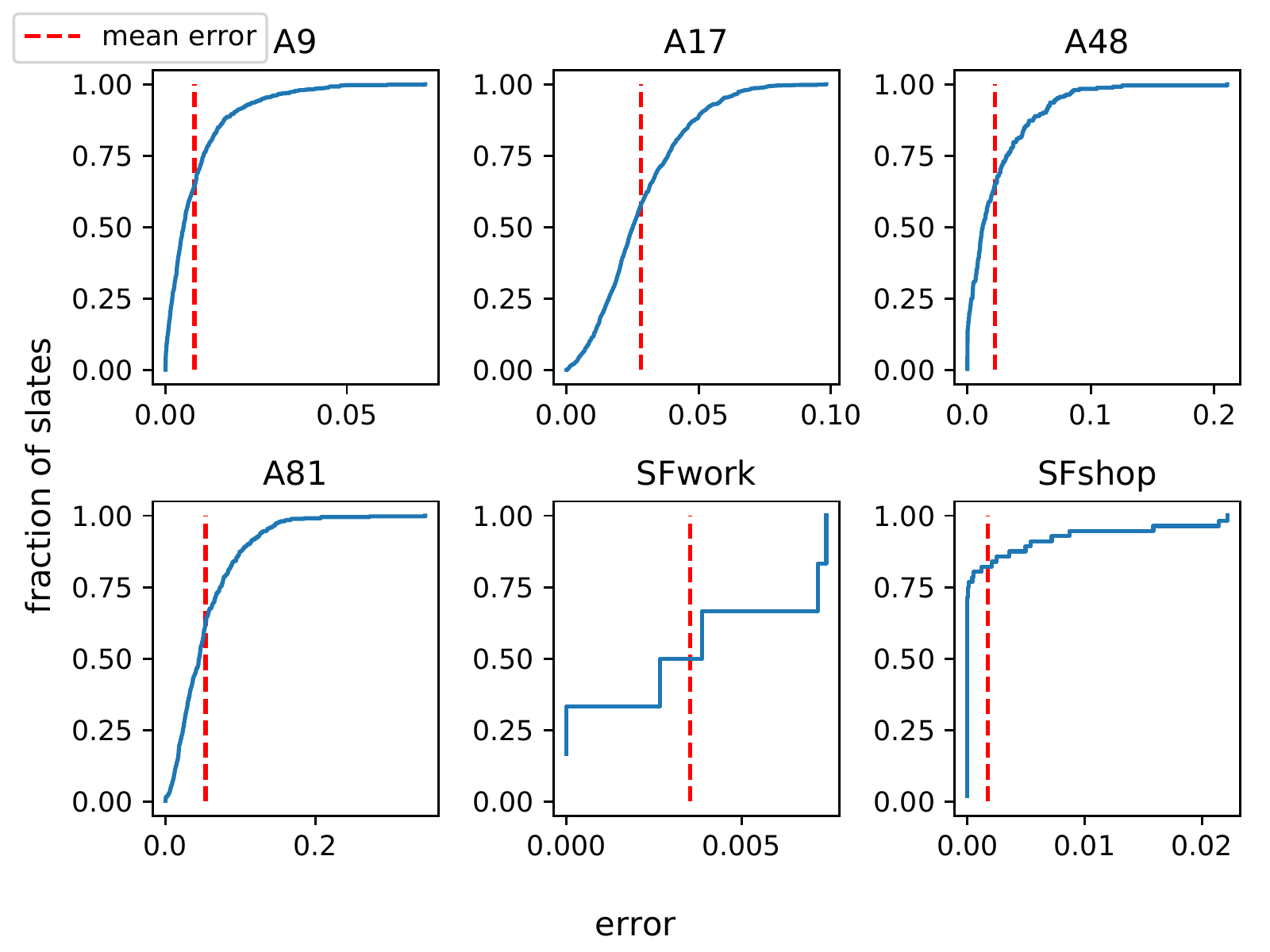}}}
\vskip -0.1in
\caption{Distribution of the errors over the slates. Fixed an error $x$, the corresponding $y$ value is the ratio $\frac{|\{S \mid S\in\mathcal{S} ~\mathrm{ and }~ |R_S - D_S|_1 \leq x\}|}{|\mathcal{S}|}$, where $R$ is the computed RUM and $D_S$ is the empirical distribution over $S$.}
\label{fig:error-distributon}
\vskip -0.4in
\end{center}
\end{figure}

\subsection{Quality of Predictions} \label{sec:exp-gen}
We performed an $\ell$-fold cross-validation to assess the generalization quality of \halg. In particular, given the slates (with repetitions), we divided them into $\ell$ groups of roughly the same size, built at random. We then performed $\ell$ iterations: during the $i$th iteration, the $i$th group is used as the test set and the union of the other $\ell-1$ groups is used as the training set. In our experiments, we set $\ell=5$ and we repeated the random splitting 10 times with different seeds.

As in \citet{ackpt22, mu19}, since the splitting is performed on the initial slates, the same slate might appear both in training and testing. Following them, we used root mean-squared error $\mathrm{RMSE}(D, \hat{D}) = \sqrt{|\mathcal{S}|^{-1}\sum_{i\in S\in\mathcal{S}}{(D_S(i)-\hat{D}_S(i))^2}}$ to evaluate the performances of the algorithms. We trained \halg for a maximum of 250 iterations. The results are shown in Table \ref{tab:generalization}.  (The results for the case $k=2$ differ from \citet{ackpt22} because they normalized by $\binom{n}{2}$ rather than $|\mathcal{S}|$, however, using their definition we get essentially the same results.)  Note that the algorithm gives performances comparable to both baselines; furthermore, using the training data directly to make predictions\footnote{We used the uniform distribution for slates that appear only in the test set.} gives results only slightly worse than \halg, this is due to the fact that \halg can represent the input data with a very small error.

\section{CONCLUSIONS}

In this paper we obtained a polynomial-time algorithm for finding a RUM that best approximates a given set of winning distributions for slates of any constant size $k$.  While our lower bound shows that the reconstruction algorithm is query-optimal, extending it work for adaptive algorithms is an interesting research direction.  Developing provably good algorithms that avoid the ellipsoid method is also an intriguing question that merits further investigation.

\subsubsection*{Acknowledgments}
We thank Pasin Manurangsi for useful discussions and suggestions.
Flavio Chierichetti and Alessandro Panconesi were supported in part by BiCi---Bertinoro International Center for Informatics.  Flavio Chierichetti was supported in part by the Google Gift ``Algorithmic and Learning Problems in Discrete Choice'' and by the PRIN project 2017K7XPAN.

\bibliographystyle{unsrtnat}
\bibliography{main}

\newpage
\onecolumn

\appendix

\section{AN EXACT ALGORITHM FOR WFHS}
In this section we give Algorithm~\ref{alg:wfhs_dp}, a dynamic programming  for the WFHS problem (Problem~\ref{prob:wfhs}). Our Algorithm generalizes the algorithm from~\citet{l64} for the FAS problem.
\begin{algorithm}[ht]
\small
\begin{algorithmic}[1]
\STATE $C(\varnothing) \leftarrow 0$
\FOR{$j = 1, \ldots, n$}
    \FOR{$A \in \binom{[n]}{j}$}
        \FOR{$a \in A$}
            \STATE $t_a\leftarrow 0$
        \ENDFOR
        \FOR{$e \in E$ such that $e\subseteq A$}
            \FOR{$a \in e$}
                \STATE $t_a \leftarrow t_a + w_e(a)$
            \ENDFOR
        \ENDFOR
        \STATE $C(A)\leftarrow \min_{a\in A}\left(t_a + C(A \setminus \{a\})\right)$
        \STATE $\ell(A) \leftarrow a$, for any $a$ such that $t_a + C(A \setminus \{a\}) = C(A)$
    \ENDFOR
\ENDFOR
\STATE $A_n \leftarrow [n]$
\STATE Let $\pi^{\star}$ be an array of size $n$
\FOR{$j=n, \ldots, 1$}
    \STATE $\pi^{\star}_j \leftarrow \ell(A_j)$
    \STATE $A_{j-1}\leftarrow A_j \setminus \{\ell(A_j)\}$
\ENDFOR
\RETURN $\pi^{\star}$
\end{algorithmic}
\caption{An Algorithm for WFHS\label{alg:wfhs_dp}.}
\end{algorithm}
\begin{theorem}
Algorithm~\ref{alg:wfhs_dp} returns an optimal solution to WFHS in time $O\left(k \cdot |E| \cdot  2^n\right) \le O\left(n^k \cdot 2^n\right)$.
\end{theorem}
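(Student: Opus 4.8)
The plan is to prove by induction on $|A|$ that, for every $A \subseteq [n]$, the value $C(A)$ computed by Algorithm~\ref{alg:wfhs_dp} equals the optimal WFHS cost of the sub-instance induced by the hyperedges $E_A := \{e \in E : e \subseteq A\}$, where we only order the vertices of $A$; that is, $C(A) = \min_{\sigma} \sum_{e \in E_A} w_e(\sigma(e))$, with $\sigma$ ranging over the orderings of $A$ and $\sigma(e)$ the top element of $e$ in $\sigma$. Taking $A = [n]$ then gives $C([n]) = \min_{\pi \in \mathbf{S}_n} C(\pi)$. First I would observe that the inner loops correctly set $t_a = \sum_{e \in E_A,\, a \in e} w_e(a)$, the total weight charged to $a$ for winning every hyperedge of $E_A$ that contains it.

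For the inductive step, fix $A$ with $|A| \ge 1$ and consider any ordering $\sigma$ of $A$ whose top (highest-ranked) element is $a$. Every $e \in E_A$ containing $a$ has $\sigma(e) = a$, so these hyperedges contribute exactly $t_a$; every other $e \in E_A$ satisfies $e \subseteq A \setminus \{a\}$, and its contribution depends only on the restriction of $\sigma$ to $A \setminus \{a\}$. Since the choice of the top element and the arrangement of the remaining $|A|-1$ vertices are independent, the optimum decomposes as $\min_{a \in A}(t_a + (\text{optimum over orderings of } A \setminus \{a\}))$, which by the inductive hypothesis is $\min_{a \in A}(t_a + C(A \setminus \{a\}))$, exactly the recurrence used by the algorithm. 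The base case $C(\varnothing) = 0$ is immediate since $E_{\varnothing} = \varnothing$, and because the sets are processed in order of increasing size, each $C(A \setminus \{a\})$ is already available when $C(A)$ is computed.

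Next I would verify that the reconstruction loop outputs a permutation attaining $C([n])$. Each pointer $\ell(A)$ stores a top element realizing the minimum, so $C(A_j) = t_{\ell(A_j)} + C(A_{j-1})$ with $A_{j-1} = A_j \setminus \{\ell(A_j)\}$; telescoping from $j = n$ down to $C(A_0) = C(\varnothing) = 0$ gives $C([n]) = \sum_{j=1}^n t_{\ell(A_j)}$, where each $t_{\ell(A_j)}$ is taken with respect to the set $A_j$. To identify this sum with the cost of the output permutation $\pi^{\star}$, I would argue that each hyperedge $e$ is charged exactly once: letting $j$ be the largest index with $e \subseteq A_j$ and $\ell(A_j) \in e$, all vertices assigned a higher rank (larger index) lie outside $e$, so $\ell(A_j)$ is the highest-ranked element of $e$ in $\pi^{\star}$; thus $e$ contributes $w_e(\pi^{\star}(e)) = w_e(\ell(A_j))$ to $t_{\ell(A_j)}$ and, since $e \not\subseteq A_{j-1}$, is never charged again. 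Hence $C(\pi^{\star}) = \sum_{j} t_{\ell(A_j)} = C([n])$, so $\pi^{\star}$ is optimal. This bookkeeping---matching each edge's winner to the unique step at which that winner is removed---is the one place where care is required; everything else is routine.

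Finally, for the running time there are $\sum_{j=1}^n \binom{n}{j} = 2^n - 1$ sets $A$. The dominant cost is accumulating the $t_a$'s: iterating over all $e \in E$ and testing $e \subseteq A$ costs $O(k)$ per pair $(e,A)$, and a fixed $e$ is a subset of exactly $2^{n-|e|} = 2^{n-k}$ sets, so the total is $\sum_{e \in E} O(k)\, 2^{n-k} = O(k|E|2^n)$. The per-set initialization of the $t_a$'s and the final $\min_{a \in A}$ each cost $O(|A|) = O(n)$, contributing $O(n 2^n)$, which is dominated in the regime $|E| = \Omega(n/k)$ relevant to our application (where $E = \binom{[n]}{k}$) and is in any case $O(n^k 2^n)$. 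Bounding $|E| \le \binom{n}{k}$ and $k\binom{n}{k} \le n^k$ then gives $O(k|E|2^n) \le O(n^k 2^n)$, completing the proof.
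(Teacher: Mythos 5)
Your proof is correct and takes essentially the same route as the paper's: the identical subset dynamic program with recurrence $C(A)=\min_{a\in A}\bigl(t_a + C(A\setminus\{a\})\bigr)$, the same pointer-based reconstruction of $\pi^{\star}$ via $\ell(\cdot)$, and the same $O\left(k\cdot|E|\cdot 2^n\right)\le O\left(n^k\cdot 2^n\right)$ time accounting. You merely make explicit what the paper leaves implicit---the induction on $|A|$ justifying the recurrence and the once-per-edge charging argument showing the telescoped sum equals $C(\pi^{\star})$---which is a faithful filling-in of detail rather than a different argument.
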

\begin{proof}

For a given $A \subseteq [n]$, let $C(A)$ be the minimum WFHS cost of a solution to the instance projected on the elements of $A$ (i.e., to the instance obtained by removing each hyperedge that is not fully contained in $A$). Then, $C(\varnothing) = 0$ and 
$$C(A) = 
\min_{a \in A} \left(C(A \setminus \{a\}) + \sum_{\substack{e \in E\\a\in e \subseteq A}} w_e(a)\right).$$

Then, the time required to fill entry $A$ of the array  is at most $O\left(|E| \cdot k\right) = O(n^k)$ --- indeed, we can initialize one variable $t_a = 0$ for each $a \in A$, for a total of $|A| \le n$ variables).  We then iterate over the edges of $E$: for each $a \in e \subseteq A$, we add $w_e(a)$ to $t_a$. The value of $C(A)$ is then the minimum, over $a \in A$, of $C(A\setminus\{a\}) + t_a$; we also set $\ell(A)$ to be the item of $A$ that achieves this minimum

Once array $A$ is filled,  a permutation $\pi^{\star}$ having minimum WFHS cost, $C(\pi^{\star}) = C([n])$,  can  be easily obtained. Let $A_n = [n]$. In general, the element in position $1 \le j \le n$ of $\pi^{\star}$ will be equal to $\ell(A_j)$, and for $1 \le j \le n$, $A_{j-1} = A_{j} \setminus \{\ell(A_{j})\}$. Thus, an extra iteration over the positions $n,n-1,\ldots,1$ is  sufficient to obtain $\pi^{\star}$.

Finally, filling the array takes time $O(k \cdot |E| \cdot 2^n) \le O(k \cdot \binom nk \cdot 2^n) \le O(n^k \cdot 2^n)$ and computing $\pi^{\star}$ takes time $O(n)$.
\end{proof}

\end{document}